\icmltitlerunning{Recurrence of Optimum for Training Weight and Activation Quantized Networks}
\pgfplotsset{compat=1.15}
\newtheorem*{rep@theorem}{\rep@title}
\newcommand{\newreptheorem}[2]{%
\newenvironment{rep#1}[1]{%
 \def\rep@title{#2 \ref{##1}}%
 \begin{rep@theorem}}%
 {\end{rep@theorem}}}
\newtheorem{definition}{Definition}
\newtheorem{theorem}{Theorem}
\newtheorem{lemma}{Lemma}
\newtheorem{prop}{Proposition}
\newtheorem{rmk}{Remark}
\newtheorem{cor}{Corollary}
\newtheorem{ex}{Example}
\DeclareMathOperator*{\argmax}{arg\,max}
\newcommand{\A}[0]{~~\text{and}~~}
\newcommand{\norm}[1]{\left\|#1\right\|}
\newcommand{\normv}[1]{\frac{#1}{\left\|#1\right\|}}
\newcommand{\inner}[2]{\left\langle#1,#2\right\rangle}
\newcommand{\bsign}[1]{\widetilde{\mathrm{sign}}\left(#1\right)}
\newcommand{\sign}[1]{\mathrm{sign}\left(#1\right)}
\newcommand{\Q}[2]{\mathcal{Q}_{#1}^{#2}}
\newcommand{\tQ}[2]{\tilde{\mathcal{Q}}_{#1}^{#2}}
\newcommand{\proj}[2]{\mathrm{proj}_{\mathcal{Q}_{#1}^{#2}}}
\newcommand{\nproj}[2]{\widetilde{\mathrm{proj}}_{\mathcal{Q}_{#1}^{#2}}}
\newcommand{\set}[1]{\left\{#1\right\}}
\newcommand{\ind}[1]{\mathds{1}_{\left\{#1\right\}}}
\newcommand{\constv}[0]{\frac{\norm{\bm v}^2}{2\sqrt{2\pi}}}
\newcommand{\Z}{\bm Z}
\newcommand{\y}{\bm y}
\newcommand{\w}{\bm w}
\newcommand{\x}{\bm x}
\newcommand{\E}{\mathbb{E}}
\newcommand{\cQ}{\mathcal{Q}}
\newcommand{\R}{\mathbb{R}}
\begin{document}
\twocolumn[
\icmltitle{Recurrence of Optimum for Training Weight and Activation Quantized Networks}

% It is OKAY to include author information, even for blind
% submissions: the style file will automatically remove it for you
% unless you've provided the [accepted] option to the icml2021
% package.

% List of affiliations: The first argument should be a (short)
% identifier you will use later to specify author affiliations
% Academic affiliations should list Department, University, City, Region, Country
% Industry affiliations should list Company, City, Region, Country

% You can specify symbols, otherwise they are numbered in order.
% Ideally, you should not use this facility. Affiliations will be numbered
% in order of appearance and this is the preferred way.
\icmlsetsymbol{equal}{*}

\begin{icmlauthorlist}
\icmlauthor{Ziang Long}{equal,uci}
\icmlauthor{Penghang Yin}{equal,abn}
\icmlauthor{Jack Xin}{uci}
\end{icmlauthorlist}

\icmlaffiliation{uci}{University of California, Irvine}
\icmlaffiliation{abn}{University at Albany, State University of New York}

\icmlcorrespondingauthor{Ziang Long}{zlong6@uci.edu}

% You may provide any keywords that you
% find helpful for describing your paper; these are used to populate
% the "keywords" metadata in the PDF but will not be shown in the document
\icmlkeywords{Machine Learning, ICML}

\vskip 0.3in
]

% this must go after the closing bracket ] following \twocolumn[ ...

% This command actually creates the footnote in the first column
% listing the affiliations and the copyright notice.
% The command takes one argument, which is text to display at the start of the footnote.
% The \icmlEqualContribution command is standard text for equal contribution.
% Remove it (just {}) if you do not need this facility.

%\printAffiliationsAndNotice{}  % leave blank if no need to mention equal contribution
%\printAffiliationsAndNotice{\icmlEqualContribution} % otherwise use the standard text.
%\maketitle

\begin{abstract}
Deep neural networks (DNNs) are quantized  for efficient inference on resource-constrained platforms. However, training deep learning models with low-precision weights and activations involves a demanding optimization task, which calls for minimizing a stage-wise loss function subject to a discrete set-constraint.
While numerous training methods have been proposed, existing studies for full quantization of DNNs are mostly empirical. From a theoretical point of view, we study practical techniques for overcoming the combinatorial nature of network quantization. Specifically, we investigate a simple yet powerful projected gradient-like algorithm for quantizing two-layer convolution networks, by repeatedly moving one step at float weights in the negative direction of a heuristic \emph{fake} gradient of the loss function (so-called coarse gradient) evaluated at quantized weights.% i.e. a combination of BinaryConnect and straight through estimator.
For the first time, we prove that under mild conditions, the sequence of quantized weights recurrently visit the global optimum of the discrete minimization problem for training fully quantized network. We also show numerical evidence of the recurrence phenomenon of weight evolution in training quantized deep networks. 
\end{abstract}

\section{Introduction}
Deep neural networks (DNNs) have been profoundly transforming machine learning, in applications of computer vision, reinforcement learning, and natural language processing, and so on. While achieving human level or even super-human performances, DNNs typically have tremendous number of weights with high resource consumption at inference time, which poses a challenge for their deployment on mobile devices used in our daily lives. To address this challenge, research efforts have been made to the quantizing weights and activations of DNNs while maintaining their performance. Quantization methods train DNNs with the weights and activation values being constrained to low-precision arithmetic rather than the conventional floating-point representation in full-precision. \cite{Hubara2017QuantizedNN,dorefa_16,halfwave_17,inq_17,louizos2018relaxed,ttq_16}, which offer the feasibility of running DNNs on CPUs rather than GPUs in real-time. For example, the XNOR-Net \cite{rastegari2016xnor} with binary weights and activations sees 58$\times$ faster convolutional operations and 32$\times$ memory savings.

Training fully quantized DNN requires solving a challenging optimization problem with piecewise constant (and non-convex) training loss functions and a discrete set-constraint.
That is, one considers the following constrained optimization problem for training quantized neural nets:
\begin{equation}\label{training}
    \min_{\w} \; f(\w):=\E_{\x\sim p(\x)}[\ell(\w; \x)] \quad \mbox{subject to} \quad \w \in \cQ
\end{equation}

where $\ell(\w;\x)$ is the loss function for sample $\x$, which is \emph{discrete-valued} as non-linear activations are also quantized;  $\cQ$ is the set of quantized weights. For general constrained  minimization, the classical projected gradient descent (PGD): 
\begin{equation*}
\w^{t+1} = \proj{}{}\left(\w^t - \eta_t \, \E[\nabla_{\w} \ell(\w^t;\x)]\right) %\tag{PGD}
\end{equation*}
is considered.
Here $\proj{}{}$ is the projection onto set $\cQ$ for quantizing float weights to ones at low bit-width, giving a weight quantization scheme. However, with quantized activations, the gradient of loss function $\nabla_{\w} \ell(\w;\x)$ is almost everywhere (a.e.) zero, leaving the standard back-propagation and hence PGD inapplicable. 

In this paper, we study the following iterative algorithm for training fully quantized networks
\begin{equation}\label{coarse}
\left\{
\begin{aligned}
&\y^{t+1} = \y^t - \eta_t \, \E[\tilde{\nabla}_{\w} \ell(\w^t;\x)]\\
&\w^{t+1} = \proj{}{}(\y^{t+1}), 
\end{aligned}
\right.
\tag{QUANT}
\end{equation}
where $\tilde{\nabla}_{\w} \ell$ denotes some heuristic modification of the vanished $\nabla_{\w} \ell$ based on the so-called straight-through estimator (STE) \cite{bengio2013estimating,hinton2012neural}, rendering a valid search direction. Following \cite{yin2018understanding} , we shall refer to this fake `gradient' induced by STE as coarse gradient throughout this paper. Compared with PGD which can be recast as the two-step iteration: 
\begin{equation}
\left\{
\begin{aligned}
&\y^{t+1} = \w^t - \eta_t \, \E[\nabla_{\w}\ell(\w^t;\x)]\\ 
&\w^{t+1} = \proj{}{}(\y^{t+1})
\end{aligned}
\right.
\tag{PGD}
\end{equation}
another key difference is that, in the gradient step, float weights $\y^{t+1}$ is updated by perturbing $\y^t$ instead of the current projection $\w^t$.

\subsection{Related works}
For the best possible performance under quantization, the pre-trained full-precision  networks need to be re-trained.
In the regime of weight quantization, the BinaryConnect scheme:
\begin{equation}\label{bc}
\left\{
\begin{aligned}
&\y^{t+1} = \y^t - \eta_t \, \E[\nabla_{\w} \ell(\w^t;\x)] \\ &\w^{t+1} = \proj{}{}(\y^{t+1})
\end{aligned}
\right.
\end{equation}
was first proposed in \cite{courbariaux2015binaryconnect} for training DNNs with binary (1-bit) weights. It is similar to \ref{coarse}, but simply uses the standard gradient $\nabla_{\w} \ell$ as the activation values were not quantized. The method was then extended to multi-bit weight quantization such as ternary weight networks \cite{twn_16}. 
On the theoretical side, \cite{li2017training} analyzed the convergence of BinaryConnect scheme for weight quantization, and proved that $\{\w^{t}\}$ converge to an error floor region of the optimal quantized weights under strong convexity and smoothness assumptions on $f$. Recently, \cite{Lin2020Dynamic} used an algorithm called ``error feedback" for pruning networks \cite{han2015deep,xiao2019autoprune}. It is basically the same as BinaryConnect, except that the weight quantization step $\proj{}{}$ is replaced with weight pruning/thresholding which can also be viewed as a projection. The authors showed the convergence to a neighborhood of optimal solution under strong convexity and smoothness assumptions whose radius is $O(\sqrt{d})$ with $d$ being the number of model parameters. Moreover, it remains unclear whether the global optimum can actually be reached in this setting. 

The idea of STE  has been extensively used for efficiently handling discrete-valued functions arising in machine learning problems. A STE, used in the backward pass only, is a heuristic proxy that substitutes the a.e. zero derivative of discrete component composited in the loss function when computing the gradient under chain rule. Its applications include, but are not limited to, network quantization \cite{bnn_16,halfwave_17,dorefa_16,pact,Hubara2017QuantizedNN,uhlich2020mixed,blumenfeld2019mean}, neural architecture search\cite{stamoulis2020single},  knowledge graphs \cite{xu2019relation},  discrete latent representations \cite{jang2016categorical}. For networks with binary activations (and real-valued weights), \cite{yin2018understanding} showed that STE-based gradient (called coarse gradient) methods converge only when a proper STE like ReLU STE \cite{halfwave_17} is used. And they proved that the negation of the resulting coarse gradient points to a descent direction that makes the training loss decrease. For quantization of both weights and activations, \cite{bnn_16,Hubara2017QuantizedNN,halfwave_17,pact,dorefa_16} utilized \ref{coarse} scheme which is the combination of BinaryConnect and STE, and achieved state-of-the-art classification accuracies. Yet to our knowledge, no convergence results of \ref{coarse} have been established to date. 

\subsection{Main contributions}
In this paper, we examine the quantization of one-hidden-layer networks with binary activation and binary or ternary weights using the \ref{coarse} algorithm. Surprisingly, the sequence of quantized weights $\{\w^t\}$ generated by \ref{coarse} is generically divergent.  Our key contributions are the \emph{first} groundbreaking theoretical results on the dynamics of \ref{coarse} algorithm for learning fully quantized neural nets: 
(1) we prove the generic divergence if the teacher parameters are not in a quantized state, and give an explicit example of oscillatory divergence behavior (the sequence $\{\w^t\}$ has period 3 and jumps between sub-optimal quantized states; see Example 1). 
(2) We explicitly point out, in the ternary case, the $n$ (out of $3^n-1$) sub-optimal quantized states that $\{\w^t\}$ could visit infinite many times; see Remark \ref{suboptimals} and Lemma \ref{choices}.
(3) We prove that $\{\w^t\}$ oscillates around the global optimum of quantization problem. Under conditions that teacher parameters and their quantized values are close enough (see Theorem \ref{recurrent}), $\{\w^t\}$ visits the quantized teacher parameters (the optimum) infinitely often ({\it recurrence}). 
Compared with theoretical results for BinaryConnect \cite{li2017training,Lin2020Dynamic}, our analysis is more precise and in depth in order to overcome  a biased gradient modification in \ref{coarse} based on straight-through estimator (STE) \cite{hinton2012neural,bengio2013estimating}. Our result is stronger in that the recurrence behavior at global minimum holds {\it without global convexity assumption of the loss function}.

\textbf{Organization.} In section 2, we introduce the problem setup and present some useful preliminary results about the \ref{coarse} algorithm. In section 3,  we summarize the main results regarding the recurrence behavior of \ref{coarse} algorithm. More technical details and sketch of proofs are presented in section 4. 

\section{Preliminaries}
\subsection{Problem Setup}

We consider a one-hidden-layer model that outputs the prediction for an input $\bm Z\in\mathbb{R}^{m\times n}$:
\begin{equation}\label{network}
y(\bm Z;\bm w):=\sum_{i=1}^mv_i\sigma\left(\bm Z_i^\top\bm w\right)=\bm v^\top\sigma\left(\bm Z\bm w\right)
\end{equation}
where $\bm Z_i^\top$ denotes the $i$-th row vector of $\bm Z$;
$\bm w\in\mathbb{R}^n$ is the trainable weights in the first linear layer, and $\bm v\in\mathbb{R}^m$ the weights in the second linear layer which are assumed to be known and fixed during the training process;  the activation function $\sigma(x)=\mathds{1}_{\{x>0\}}$ is \emph{binary}, acting component-wise on the vector $\bm Z\bm w$. The label is generated according to $y^*_{\Z} := y(\bm Z;\bm w^*)$ for some unknown teacher (real-valued) parameters $\bm w^*\in\mathbb{R}^n$. 

\tikzset{%
   neuron missing/.style={
    draw=none, 
    scale=2,
    text height=0.333cm,
    execute at begin node=\color{black!20}$\vdots$
  },
}

\tikzset{%
   neuron missing2/.style={
    draw=none, 
    scale=2,
    text height=0.333cm,
    execute at begin node=\color{black!20}$\cdots$
  },
}

\tikzset{%
   neuron missing3/.style={
    draw=none, 
    scale=2,
    text height=0.333cm,
    execute at begin node=\color{black!20}$\ddots$
  },
}

\tikzset{%
   neuron missing4/.style={
    draw=none, 
    scale=2,
    text height=0.333cm,
    execute at begin node=\color{black!50}$\vdots$
  },
}

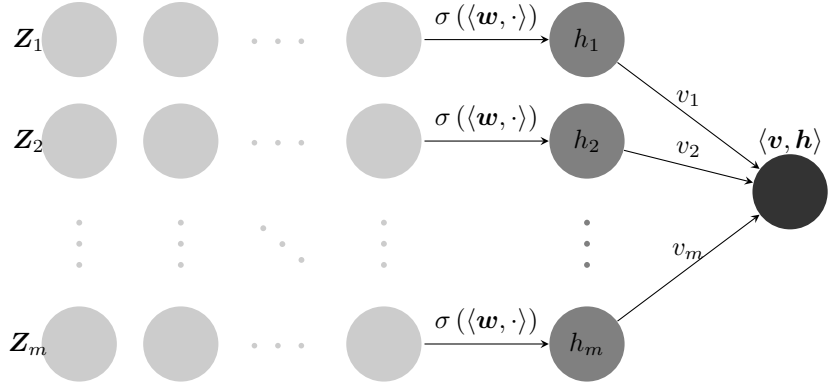
\begin{figure*}[ht]
\centering
\begin{tikzpicture}[scale=0.9, x=1.5cm, y=1.5cm, >=stealth]

\foreach \m/\l [count=\y] in {1,2}
{
 \node [circle,fill=black!20,minimum size=1cm] (input-\m) at (-3,2-\y) {};
 \node [circle,fill=black!20,minimum size=1cm] (input-\m) at (-2,2-\y) {};
 \node [circle,fill=black!20,minimum size=1cm] (input-\m) at (0,2-\y) {};
 \node [circle,fill=white!50,minimum size=0.1cm] (input-\m) at (-3.5,2-\y) {};
}
\foreach \m/\l [count=\y] in {3}
{
 \node [circle,fill=black!20,minimum size=1cm] (input-\m) at (-3,-2) {};
 \node [circle,fill=black!20,minimum size=1cm] (input-\m) at (-2,-2) {};
 \node [circle,fill=black!20,minimum size=1cm] (input-\m) at (0,-2) {};
 \node [circle,fill=white!50,minimum size=0.1cm] (input-\m) at (-3.5,-2) {};
}
 
 \node [neuron missing2] at (-1,1.1){};
 \node [neuron missing2] at (-1,0.1){};
 \node [neuron missing2] at (-1,-1.9){};
 
 \node [neuron missing]  at (0,-1) {};
 \node [neuron missing3]  at (-1,-1) {};
 \node [neuron missing]  at (-2,-1) {};
 \node [neuron missing]  at (-3,-1) {};

\foreach \m/\l [count=\y] in {1,2}
{
 \node [circle,fill=black!50,minimum size=1cm] (hidden-\m) at (2,2-\y) {};
}
\foreach \m/\l [count=\y] in {3}
{
 \node [circle,fill=black!50,minimum size=1cm ] (hidden-\m) at (2,-2) {};
}
 
 \node [neuron missing4]  at (2,-1) {};

\foreach \m [count=\y] in {1}
  \node [circle,fill=black!80,minimum size=1cm ] (output-\m) at (4,-0.5) {};

\foreach \l [count=\i] in {1,2,3}
  \draw [<-] (hidden-\i) -- ++(-1.6,0)
    node [above, midway] {$\sigma\left(\inner{\bm w}{\cdot}\right)$};

\foreach \i in {1,2}
    \draw [->] (hidden-\i) -- (output-1)
     node [above, midway] {$v_\i$};
\draw [->] (hidden-3) -- (output-1) node [above,midway] {$v_m$};

\foreach \l [count=\i] in {1,2,m}
  \node [] at (input-\i) {$\bm Z_{\l}$};

\foreach \l [count=\i] in {1,2,m}
  \node [] at (hidden-\i) {$h_{\l}$};
  
\node [] at (4,0) {$\inner{\bm v}{\bm h}$};

%\foreach \l [count=\x from 0] in {Input\\Layer\\$\bm Z\in\mathbb{R}^{m\times n}$, Hidden\\Layer\\$\sigma\left(\bm Z^\top\bm w\right)\in\mathbb{R}_+^{m}$, Output\\Layer\\$\bm v^\top\sigma\left(\bm Z^\top\bm w\right)\in\mathbb{R}$}
%  \node [align=center, above] at (\x*2,1.5) {\l};
\end{tikzpicture}
\caption{One-hidden-layer neural network. The first linear layer resembles a convolutional layer with each $\Z_i$ being a patch of size $n$ and $\w$ being the shared weights or filter. The second linear layer serves as the classifier.}
\label{fig:cnn}
\end{figure*}\label{fig1}

We fit the described model with quantized weights $\w\in\cQ$ and binary activation function $\sigma(x) = \mathds{1}_{\{x>0\}}$ on the i.i.d. Gaussian data $\{({\bm Z}, y^*_{\Z} ) \}_{{\bm Z}\sim \mathcal{N}(\mathbf{0},\mathbf{I})}$. In this paper, we will focus on the cases of binary and ternary weights. In the binary case, every quantized weight in $\w$ is either $\alpha$ or $-\alpha$ for some universal real-valued constant $\alpha>0$, or equivalently, $\cQ=\mathbb{R}_+\times\set{\pm1}^n$; this setup of binary weights is widely adopted in the literature; for example, \cite{rastegari2016xnor}. Similarly in the ternary case, we take $\cQ = \mathbb{R}_+\times\set{0,\pm1}^n$; see \cite{twn_16,Yin2016TrainingTN} for examples.

We use the squared loss to measure the discrepancy between the model output and label:
\begin{equation}\label{sample_loss}
\begin{aligned}
&\ell(\bm w; \bm \Z):=\frac{1}{2}\left(y(\bm Z; \bm w)- y^*_{\Z} \right)^2\\
=&\frac{1}{2}\left(\bm v^\top\sigma(\bm Z\bm w)-\bm v^\top\sigma(\bm Z\bm w^*)\right)^2.
\end{aligned}
\end{equation}
%We assume that the entries of $\bm Z\in\mathbb{R}^{m\times n}$ are i.i.d. sampled from Gaussian distribution $\mathcal{N}(0,1)$. 
We cast the learning task as the following population loss minimization problem:
\begin{equation}\label{target}
\min_{\w\in\R^n}f(\bm w):=\mathbb{E}_{{\bm Z}\sim \mathcal{N}(\mathbf{0},\mathbf{I})}\left[\ell(\bm w;\bm Z)\right] \quad \mbox{subject to} \quad  \w\in\cQ
\end{equation}
where the sample loss  function $\ell(\bm w;\bm Z)$ is given in (\ref{sample_loss}).

In the rest of the paper, we study the convergence behavior of \ref{coarse} described below in Algorithm \ref{alg:bc} for solving optimization problem (\ref{target}), in which $\tilde{\nabla}f$ standards for an unusual gradient of $f$ called coarse gradient \cite{yin2018understanding}, so as to side-step the vanished gradient issue.
Since the loss function is scale-invariant, i.e., $\ell(\bm Z;\bm w)=\ell(\bm Z;\bm w/c)$ for any scalar $c>0$, without loss of generality,
we assume that $\norm{\bm w^*} =1$ is unit-normed.

\begin{algorithm}[htb]
\caption{\ref{coarse} algorithm for solving (\ref{target})}
\label{alg:bc}
\begin{algorithmic}
\STATE {\bfseries Input: number of iterations $T$, learning rate $\eta_t$, weight bits $b$.}
\STATE {\bfseries Initialize: auxiliary real-valued weights $\bm y^0\in\R^n$.}
\FOR{$t=1${\bfseries to}$T$}
\STATE $\bm y^t = \bm y^{t-1} - \eta_t\tilde{\nabla}f(\bm w^{t-1})$
\STATE  $\bm w^t = \proj{}{} (\bm y^t)$ 
\ENDFOR
\end{algorithmic}
\end{algorithm}
 Throughtout this paper we assume the following on the learning rate $\eta_t>0$:
\begin{enumerate}
    % \item $\eta_t>0$.
    \item $\sum_{t=1}^\infty\eta_t=\infty$.
    \item $\eta_t$ is upper bounded by some positive constant $\eta$.
\end{enumerate}

\subsection{Characterization of Optimal Solutions}
To study the convergence of Algorithm \ref{alg:bc}, we first obtain the closed-form expression of the objective function for the optimization problem (\ref{sample_loss}), which only depends on the angle between quantized weight vector $\w$ and the true weight vector $\w^*$. This helps us find the expression of global minimum to (\ref{alg:bc}). 

%This justifies the normalization step applied to the quantized weights in Algorithm \ref{alg:bc}.

\begin{lemma}\label{population_loss}
Let $\bm w\not=\bm 0$ be nonzero vector.
\begin{itemize}
    \item the training loss in (\ref{target}) is given by 
\begin{equation*}\label{simpleloss}
f(\bm w)=\frac{\norm{\bm v}^2}{2\pi}\arccos\left(\frac{\bm w^\top\bm w^*}{\norm{\bm w}}\right)%=\frac{\norm{\bm v}^2}{2\pi}\theta.
\end{equation*}
%In addition, $f(\bm w)=\frac{1}{8}\bm v^\top\left(\bm I_m+\bm1_m\bm1_m^\top\right)\bm v$ for $\bm w=\bm0$.
\item For any $\delta>0$, $\bm w=\delta \cdot\proj{}{} (\w^*)$ is a global optimum of quantization problem (\ref{target}).
\end{itemize}
\end{lemma}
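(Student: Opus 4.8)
The plan is to prove the two bullets separately.

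\textbf{Closed-form of $f$.} I would expand the squared loss into a double sum,
$\ell(\bm w;\bm Z) = \tfrac12 \sum_{i,j} v_i v_j \big(\sigma(\bm Z_i^\top \bm w) - \sigma(\bm Z_i^\top \bm w^*)\big)\big(\sigma(\bm Z_j^\top \bm w) - \sigma(\bm Z_j^\top \bm w^*)\big)$,
and take the expectation term by term. Since the rows $\bm Z_1,\dots,\bm Z_m$ are i.i.d.\ standard Gaussian vectors, for $i\ne j$ the two factors are independent, and each has mean zero because $\bm Z_i^\top \bm w \sim \mathcal{N}(0,\|\bm w\|^2)$ gives $\mathbb{E}\,\sigma(\bm Z_i^\top \bm w) = \mathrm{P}(\bm Z_i^\top \bm w>0)=\tfrac12$ (likewise for $\bm w^*$); hence all cross terms vanish. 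For the diagonal terms, $(\sigma(a)-\sigma(b))^2 = \mathds{1}_{\{\sigma(a)\ne\sigma(b)\}}$ since $\sigma\in\{0,1\}$, so the quantity needed is the probability that the signs of $\bm Z_i^\top\bm w$ and $\bm Z_i^\top\bm w^*$ disagree. This reduces to a two-dimensional computation: restricting $\bm Z_i$ to $\mathrm{span}\{\bm w,\bm w^*\}$ yields an isotropic planar Gaussian, and the disagreement region is two opposite sectors of total angle $2\theta$ with $\theta=\arccos\big(\tfrac{\bm w^\top\bm w^*}{\|\bm w\|\,\|\bm w^*\|}\big)$, so the probability is $\theta/\pi$. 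Summing, $f(\bm w) = \tfrac12\sum_i v_i^2\cdot\tfrac{\theta}{\pi} = \tfrac{\|\bm v\|^2}{2\pi}\arccos\big(\tfrac{\bm w^\top\bm w^*}{\|\bm w\|}\big)$, using $\|\bm w^*\|=1$.

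\textbf{Global optimum.} Since $f$ depends on $\bm w$ only through the cosine similarity $\tfrac{\bm w^\top\bm w^*}{\|\bm w\|}$ and $\arccos$ is strictly decreasing, a minimizer of $f$ over $\mathcal{Q}$ is any $\bm w\in\mathcal{Q}$ maximizing $\tfrac{\bm w^\top\bm w^*}{\|\bm w\|}$. Writing $\bm w=\alpha\bm s$ with $\alpha>0$ and $\bm s\in\{\pm1\}^n$ (resp.\ $\{0,\pm1\}^n$), the cosine similarity equals $\tfrac{\bm s^\top\bm w^*}{\|\bm s\|}$, independent of $\alpha$; and since the sign set is symmetric, maximizing this over admissible $\bm s$ equals maximizing $\tfrac{|\bm s^\top\bm w^*|}{\|\bm s\|}$. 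For the Euclidean projection, I would first minimize $\|\alpha\bm s-\bm w^*\|^2 = \alpha^2\|\bm s\|^2 - 2\alpha\,\bm s^\top\bm w^* + \|\bm w^*\|^2$ over $\alpha>0$ for fixed $\bm s$, obtaining the value $\|\bm w^*\|^2 - \tfrac{(\bm s^\top\bm w^*)^2}{\|\bm s\|^2}$ (with the sign of $\bm s$ chosen so that $\bm s^\top\bm w^*>0$), so that $\mathrm{proj}_{\mathcal{Q}}(\bm w^*)$ is realized by the sign pattern maximizing $\tfrac{(\bm s^\top\bm w^*)^2}{\|\bm s\|^2}$ — the same $\bm s$ as above. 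Hence $\mathrm{proj}_{\mathcal{Q}}(\bm w^*)$ attains the maximal cosine similarity over $\mathcal{Q}$; since $\mathcal{Q}$ is a cone, so does $\delta\cdot\mathrm{proj}_{\mathcal{Q}}(\bm w^*)$ for every $\delta>0$, making it a global optimum of (\ref{target}).

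The vanishing of the cross terms and the algebra of the projection are routine; the one genuinely substantive step is the angle probability $\theta/\pi$, which I expect to be the main point to get exactly right. I would establish it via the planar-sector argument above, or equivalently via the bivariate-normal orthant probability $\mathrm{P}(\bm Z_i^\top\bm w>0,\bm Z_i^\top\bm w^*>0)=\tfrac12-\tfrac{\theta}{2\pi}$ combined with $(\sigma(a)-\sigma(b))^2=\sigma(a)+\sigma(b)-2\sigma(a)\sigma(b)$. A minor point is whether the scalar factor in $\mathcal{Q}$ is allowed to be zero; if a strict inequality is used one simply passes to the closure (or notes the infimum over $\alpha$ is attained in the limit), which does not affect the identification of the optimal sign pattern.
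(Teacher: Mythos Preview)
Your proposal is correct. The paper itself does not give a proof of this lemma; it merely remarks that the result ``can be easily derived from Lemma~1 of \cite{yin2018understanding}'' and omits the argument. Your direct derivation---expanding the square, killing the off-diagonal terms via independence and the fact that $\mathbb{E}\,\sigma(\bm Z_i^\top\bm w)=\tfrac12$, and evaluating the diagonal terms through the bivariate-Gaussian orthant probability $\mathrm{P}(X>0,Y>0)=\tfrac14+\tfrac{1}{2\pi}\arcsin\rho$ (equivalently, the sector argument giving disagreement probability $\theta/\pi$)---is exactly the standard route to the formula in the cited reference, so there is no substantive difference in approach. Your reduction of the second bullet to ``projection maximizes $(\bm s^\top\bm w^*)^2/\|\bm s\|^2$, hence maximizes cosine similarity, hence minimizes $f$'' is clean and correct; the symmetry of the sign set handles the sign of $\bm s^\top\bm w^*$ as you note.
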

The above result can be easily derived from Lemma 1 of \cite{yin2018understanding}, so we omit the proof. Lemma \ref{population_loss} states that the optimal quantized weights is just the projection of $\bm w^*$ onto $\cQ$, i.e., the direct quantization of teacher parameters $\w^*$. Note that the projection/quantization may not be unique, we refer to $\proj{}{}(\y)$ as any choice of the projection of $\y$ onto $\cQ$.
%For any non-zero vector $\bm y$, we denote $\nproj{}{}(\bm y):= $ the normalization of $\proj{}{}(\y)$. 

\subsection{Coarse Gradient}
In this part, we specify the coarse gradient $\tilde{\nabla}f(\bm w)$ in Algorithm \ref{alg:bc}. The standard back-propagation gives the gradient of $\ell(\w;\Z)$ w.r.t. $\bm w$ by 
\begin{equation*}\label{sample_grad}
    \nabla_{\bm w}\ell(\bm w; \bm Z)=\bm Z^\top\left(\sigma'(\bm Z\bm w)\odot\bm v\right)\ell(\bm w; \bm Z).
\end{equation*}
Note that $\sigma'$ is zero a.e., which makes $\nabla_{\bm w}\ell(\bm w; \bm Z)$ inapplicable to the training. The sample coarse gradient w.r.t. $\bm w$ associated with the sample $(\bm Z,y^*_{\bm Z})$ is given by replacing $\sigma'$ with a surrogate derivative, known as straight-through estimator (STE) \cite{bengio2013estimating,yin2018understanding}. Here we consider the derivative of ReLU function $\mu(x)=\max\{x, 0\}$ which is a widely used STE for quantization, namely, we modify the original gradient $ \nabla_{\bm w}\ell(\bm w; \bm Z)$ as follows:
\begin{equation*}\label{sample_cgrad}
    \tilde{\nabla}_{\bm w}\ell(\bm w; \bm \Z)=\bm Z^\top\left(\mu'(\bm Z\bm w)\odot\bm v\right)\ell(\bm w;\bm Z).
\end{equation*}
The coarse gradient induced by ReLU STE $\mu'$ is just the expectation of $\tilde{\nabla}_{\bm w}\ell(\bm w; \bm \Z)$ over $\bm Z \sim \mathcal{N}(\mathbf{0},\mathbf{I})$.
We evaluate the coarse gradient  $\tilde{\nabla}f(\bm w)$ used in Algorithm \ref{alg:bc}:
\begin{lemma}\label{cgrad}
The expected coarse gradient of $\ell(\bm w;\bm Z)$ w.r.t. $\bm w$ is
\begin{equation}\label{cgradeq}
\begin{aligned}
\tilde{\nabla}f(\bm w) :=&  \E_{\Z\sim \mathcal{N}(\mathbf{0},\mathbf{I})} [\tilde{\nabla}_{\bm w}\ell(\bm w; \bm \Z)] \\
=& \constv\left(\normv{\bm w}-\bm w^*\right).
\end{aligned}
\end{equation}
\end{lemma}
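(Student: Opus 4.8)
The plan is to evaluate the expectation defining $\tilde\nabla f(\bm w)$ in~\eqref{cgradeq} directly, reducing it to two low-dimensional Gaussian integrals. Let $\bm Z_1,\dots,\bm Z_m$ be the rows of $\bm Z$, which are i.i.d.\ $\mathcal N(\mathbf 0,\mathbf I_n)$. Unpacking $\tilde\nabla_{\bm w}\ell$, whose scalar factor is the residual $y(\bm Z;\bm w)-y^*_{\bm Z}=\sum_{i=1}^m v_i\bigl(\sigma(\bm Z_i^\top\bm w)-\sigma(\bm Z_i^\top\bm w^*)\bigr)$,
\begin{equation*}
\tilde\nabla_{\bm w}\ell(\bm w;\bm Z)=\Bigl(\sum_{k=1}^m v_k\,\mathds 1_{\{\bm Z_k^\top\bm w>0\}}\,\bm Z_k\Bigr)\Bigl(\sum_{i=1}^m v_i\bigl(\sigma(\bm Z_i^\top\bm w)-\sigma(\bm Z_i^\top\bm w^*)\bigr)\Bigr).
\end{equation*}
First I would take expectations and split the double sum over $(k,i)$ into the diagonal $i=k$ and the off-diagonal $i\neq k$. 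In an off-diagonal term the factors depending on $\bm Z_k$ and on $\bm Z_i$ are independent, while $\E\bigl[\sigma(\bm Z_i^\top\bm w)-\sigma(\bm Z_i^\top\bm w^*)\bigr]=\tfrac12-\tfrac12=0$ since $\bm Z_i^\top\bm w$ and $\bm Z_i^\top\bm w^*$ are centered Gaussian; so all off-diagonal terms drop out. The diagonal terms share a common law, leaving
\begin{equation*}
\tilde\nabla f(\bm w)=\norm{\bm v}^2\,\E\Bigl[\mathds 1_{\{\bm g^\top\bm w>0\}}\,\bm g\,\bigl(\sigma(\bm g^\top\bm w)-\sigma(\bm g^\top\bm w^*)\bigr)\Bigr],\qquad \bm g\sim\mathcal N(\mathbf 0,\mathbf I_n),
\end{equation*}
so it remains to compute one expectation over a single standard Gaussian vector.

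On $\{\bm g^\top\bm w>0\}$ we have $\sigma(\bm g^\top\bm w)=1$, hence the bracket equals $\E[\mathds 1_{\{\bm g^\top\bm w>0\}}\bm g]-\E[\mathds 1_{\{\bm g^\top\bm w>0,\ \bm g^\top\bm w^*>0\}}\bm g]$, and I would handle the two pieces separately. For the first, decompose $\bm g$ along $\hat{\bm w}:=\bm w/\norm{\bm w}$ and its orthogonal complement; the orthogonal part is mean zero and independent of $\bm g^\top\hat{\bm w}$, so only the $\hat{\bm w}$ component survives and $\E[\mathds 1_{\{\bm g^\top\bm w>0\}}\bm g]=\E[\mathds 1_{\{h>0\}}h]\,\hat{\bm w}=\tfrac1{\sqrt{2\pi}}\hat{\bm w}$ with $h\sim\mathcal N(0,1)$. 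For the second, assume $\bm w$ and $\bm w^*$ are not parallel (the parallel case is immediate: then $\hat{\bm w}=\pm\bm w^*$, and the event is either null or all of $\{\bm g^\top\bm w>0\}$). Restricting to $P:=\mathrm{span}\{\bm w,\bm w^*\}$ — the component of $\bm g$ orthogonal to $P$ again integrates to zero — write $\bm w^*=\cos\theta\,\hat{\bm w}+\sin\theta\,\bm e$ for an orthonormal basis $\hat{\bm w},\bm e$ of $P$ with $\theta\in(0,\pi)$ the angle between $\bm w$ and $\bm w^*$, and pass to polar coordinates for $(g_1,g_2):=(\bm g^\top\hat{\bm w},\bm g^\top\bm e)=(r\cos\phi,r\sin\phi)$. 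The event becomes the angular sector $\phi\in(\theta-\tfrac\pi2,\tfrac\pi2)$; the radial integral $\int_0^\infty r^2e^{-r^2/2}\,dr=\sqrt{\pi/2}$ factors out, and the angular integrals of $\cos\phi$ and $\sin\phi$ give $\E[\mathds 1_{\{\bm g^\top\bm w>0,\ \bm g^\top\bm w^*>0\}}\bm g]=\tfrac1{2\sqrt{2\pi}}\bigl((1+\cos\theta)\hat{\bm w}+\sin\theta\,\bm e\bigr)$.

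Finally I would assemble: using $\sin\theta\,\bm e=\bm w^*-\cos\theta\,\hat{\bm w}$ collapses the last expression to $\tfrac1{2\sqrt{2\pi}}(\hat{\bm w}+\bm w^*)$, so the bracketed expectation is $\tfrac1{\sqrt{2\pi}}\hat{\bm w}-\tfrac1{2\sqrt{2\pi}}(\hat{\bm w}+\bm w^*)=\tfrac1{2\sqrt{2\pi}}(\hat{\bm w}-\bm w^*)$, and multiplying by $\norm{\bm v}^2$ gives $\tilde\nabla f(\bm w)=\frac{\norm{\bm v}^2}{2\sqrt{2\pi}}\bigl(\frac{\bm w}{\norm{\bm w}}-\bm w^*\bigr)=\constv\bigl(\normv{\bm w}-\bm w^*\bigr)$, as claimed. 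The routine points — exchanging expectation with the finite sums (valid since $\sigma,\mu'$ are bounded and $\bm g$ has all moments), the symmetry arguments killing the components orthogonal to the relevant subspaces, and the harmless null set $\{\bm g^\top\bm w=0\}$ — are standard; the one genuinely computational step, and the likeliest source of slips, is the $2$D sector integral: fixing the angular limits correctly (hence the exact $1+\cos\theta$ and $\sin\theta$ coefficients) and re-expressing the answer via $\bm w^*$ rather than the auxiliary direction $\bm e$.
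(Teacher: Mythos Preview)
Your proof is correct. The off-diagonal cancellation, the decomposition along $\hat{\bm w}$ and the plane $P$, the angular limits $(\theta-\tfrac\pi2,\tfrac\pi2)$, the radial integral $\sqrt{\pi/2}$, and the final collapse via $\sin\theta\,\bm e=\bm w^*-\cos\theta\,\hat{\bm w}$ all check out.

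The route, however, differs from the paper's. The paper does not compute from scratch: it quotes from \cite{yin2018understanding} the intermediate formula
\[
\tilde\nabla f(\bm w)=\frac{\norm{\bm v}^2}{\sqrt{2\pi}}\left(\frac{\bm w}{\norm{\bm w}}-\cos\!\Big(\frac{\theta}{2}\Big)\,\frac{\tfrac{\bm w}{\norm{\bm w}}+\bm w^*}{\bigl\|\tfrac{\bm w}{\norm{\bm w}}+\bm w^*\bigr\|}\right),
\]
and then observes geometrically that, since $\tilde{\bm w}:=\bm w/\norm{\bm w}$ and $\bm w^*$ are unit vectors with angle $\theta$, one has $\|\tilde{\bm w}+\bm w^*\|=2\cos(\theta/2)$, so the second term is simply $\tfrac12(\tilde{\bm w}+\bm w^*)$ and the expression reduces to $\frac{\norm{\bm v}^2}{2\sqrt{2\pi}}(\tilde{\bm w}-\bm w^*)$. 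In effect the paper outsources your sector integral to prior work and keeps only the one-line isosceles-triangle simplification. What your approach buys is self-containment: you do not need the cited lemma, and the $2$D polar computation you carry out is exactly what underlies that lemma. What the paper's approach buys is brevity, given the citation. Your direct computation also makes transparent where the $\tfrac12$ comes from --- it is the difference $\tfrac1{\sqrt{2\pi}}\hat{\bm w}-\tfrac1{2\sqrt{2\pi}}(\hat{\bm w}+\bm w^*)$ --- whereas in the paper it emerges from the half-angle identity $\cos(\theta/2)\big/\|\tilde{\bm w}+\bm w^*\|=\tfrac12$.
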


\subsection{Weight Quantization Step}
The following two lemmas give the closed-form formulas of the projection/quantization $\proj{}{}(\cdot)$ in Algorithm \ref{alg:bc} in the binary and ternary cases, respectively.

\begin{lemma}[Binary Case]\label{proj2}
For any non-zero $\bm y\in\mathbb{R}^n$, the projection of $\bm y$ onto $\cQ = \R_+ \times \{\pm 1\}^n$ is
$$
\proj{}{}(\y)=\frac{\norm{\bm y}_1}{n}\bsign{\bm y},
$$
where the sign function acts element-wise
$$
\bsign{\y}_i = 
\begin{cases}
1 & \mbox{if } \; y_i\geq 0 \\
-1 & \mbox{if } \; y_i< 0.
\end{cases}
$$
\end{lemma}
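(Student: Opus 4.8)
The plan is to evaluate the Euclidean projection directly, by separating the positive scale factor from the sign pattern. Write a generic element of $\cQ = \R_+\times\set{\pm1}^n$ as $\w = \alpha\bm s$ with $\alpha>0$ and $\bm s\in\set{\pm1}^n$, so that computing $\proj{}{}(\y)$ amounts to solving
\begin{equation*}
\min_{\alpha>0,\ \bm s\in\set{\pm1}^n}\norm{\alpha\bm s-\y}^2.
\end{equation*}
Since $\norm{\bm s}^2=n$ for every $\bm s\in\set{\pm1}^n$, expanding the square gives $\norm{\alpha\bm s-\y}^2 = \alpha^2 n - 2\alpha\inner{\bm s}{\y} + \norm{\y}^2$, and the only term coupling $\alpha$ and $\bm s$ is $-2\alpha\inner{\bm s}{\y}$.

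First I would fix $\alpha>0$ and minimize over $\bm s$. Because $\alpha>0$, this is equivalent to maximizing $\inner{\bm s}{\y}=\sum_{i=1}^n s_iy_i$ over $\bm s\in\set{\pm1}^n$; the maximization decouples across coordinates and is achieved by aligning each $s_i$ with the sign of $y_i$. The choice $\bm s=\bsign{\y}$ is admissible and yields $\inner{\bm s}{\y}=\sum_{i=1}^n|y_i|=\norm{\y}_1$. The key point is that this optimal sign pattern does not depend on $\alpha$, so the joint problem collapses to minimizing the scalar function $g(\alpha):=\alpha^2 n - 2\alpha\norm{\y}_1 + \norm{\y}^2$ over $\alpha>0$.

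Then I would observe that $g$ is a strictly convex quadratic whose unconstrained minimizer is $\alpha^\star=\norm{\y}_1/n$, and that $\alpha^\star>0$ holds precisely because $\y\neq\bm 0$ forces $\norm{\y}_1>0$; hence the constraint $\alpha>0$ is inactive and $\alpha^\star$ is the minimizer of $g$ on $\R_+$. Combining the two steps gives $\proj{}{}(\y)=\frac{\norm{\y}_1}{n}\bsign{\y}$, as claimed.

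The computation is short, so there is no serious obstacle; the only points meriting care are (i) justifying the decoupling — the optimal $\bm s$ must be seen to be independent of the (positive) value of $\alpha$, which is exactly where positivity of $\alpha$ is used — and (ii) the fact that when $\y$ has vanishing coordinates the minimizer is non-unique, with the displayed formula being one valid representative of $\proj{}{}(\y)$ compatible with the tie-breaking convention built into $\bsign{\cdot}$.
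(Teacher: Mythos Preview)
Your proof is correct and is precisely the standard argument for this result. The paper does not provide its own proof of this lemma; it simply attributes the result to \cite{rastegari2016xnor}, whose derivation is essentially identical to yours (expand the squared distance, optimize $\bm s$ first using that $\norm{\bm s}^2=n$ is constant so that the optimal sign pattern is independent of $\alpha>0$, then solve the resulting scalar quadratic in $\alpha$).
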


The above lemma is due to \cite{rastegari2016xnor}.
In the ternary case, \cite{Yin2016TrainingTN} gives the following result:

\begin{lemma}[Ternary Case]\label{proj3}
For any non-zero $\bm y\in\mathbb{R}^n$, the projection of $\bm y$ on $\cQ = \R_+ \times \{0, \pm 1\}^n$ is
$$\proj{}{} (\y)=\frac{\norm{\bm y_{\left[j^*\right]}}_1}{j^*}\sign{\bm y_{\left[j^*\right]}}$$
where 
$j^*=\argmax_{1\leq j\leq n}\frac{\norm{\bm y_{\left[j\right]}}_1^2}{j}$, and $\bm y_{[j]}\in\R^n$ extracts the first $j$ largest entries in magnitude of $\bm y$ and enforces $0$ elsewhere. Here,
$$
\sign{\y}_i = 
\begin{cases}
1 & \mbox{if } \; y_i> 0 \\
0 & \mbox{if } \; y_i= 0 \\
-1 & \mbox{if } \; y_i< 0.
\end{cases}
$$

\end{lemma}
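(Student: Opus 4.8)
The plan is to treat the projection as a two-stage optimization. Every point of $\cQ = \R_+\times\{0,\pm1\}^n$ has the form $\alpha\bm q$ with $\alpha>0$ and $\bm q\in\{0,\pm1\}^n$, so computing $\proj{}{}(\y)$ amounts to solving $\min_{\alpha>0,\;\bm q}\norm{\y-\alpha\bm q}^2$. First I would fix the pattern $\bm q\neq\bm 0$ and minimize over the scalar $\alpha$. Since $\{0,\pm1\}^n$ is invariant under $\bm q\mapsto-\bm q$, one may assume $\inner{\y}{\bm q}\geq 0$; then, writing $k=\norm{\bm q}_0$ for the number of nonzero entries so that $\norm{\bm q}^2=k$, the quadratic $\norm{\y}^2-2\alpha\inner{\y}{\bm q}+\alpha^2 k$ is minimized at $\alpha^\star=\inner{\y}{\bm q}/k$, with optimal value $\norm{\y}^2-\inner{\y}{\bm q}^2/k$. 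So the problem reduces to \emph{maximizing} $\inner{\y}{\bm q}^2/\norm{\bm q}_0$ over nonzero ternary vectors $\bm q$.

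Next I would run a rearrangement argument over the support size $j=\norm{\bm q}_0\in\{1,\dots,n\}$. For a fixed support $S$ with $|S|=j$ one has $\inner{\y}{\bm q}=\sum_{i\in S}q_i y_i\leq\sum_{i\in S}|y_i|$, with equality exactly when $q_i=\sign{y_i}$ on $S$; and among all $S$ of size $j$, the sum $\sum_{i\in S}|y_i|$ is largest when $S$ collects the $j$ coordinates of $\y$ of largest magnitude, in which case it equals $\norm{\y_{[j]}}_1$. Hence $\max_{\norm{\bm q}_0=j}\inner{\y}{\bm q}^2/\norm{\bm q}_0=\norm{\y_{[j]}}_1^2/j$, and maximizing over $j$ yields $j^\star=\argmax_{1\leq j\leq n}\norm{\y_{[j]}}_1^2/j$, attained at $\bm q^\star=\sign{\y_{[j^\star]}}$ and $\alpha^\star=\norm{\y_{[j^\star]}}_1/j^\star$ (which is positive because $\y\neq\bm 0$). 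This is precisely the claimed formula; when magnitudes of entries of $\y$ tie or several $j$ attain the maximum, any consistent choice gives a minimizer, matching the convention that $\proj{}{}(\cdot)$ denotes an arbitrary such projection.

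The only step I expect to require care is the rearrangement claim: one must verify that splitting the discrete optimization into the choice of support and the choice of signs loses nothing, and that ties do not produce optimal patterns outside the stated family. The rest is the elementary one-variable minimization above, so the argument is short overall. (Alternatively one may simply invoke \cite{Yin2016TrainingTN}, where this projection is derived; the sketch above is how I would reconstruct it.)
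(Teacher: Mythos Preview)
Your argument is correct. Note, however, that the paper does not actually prove this lemma: it simply attributes the result to \cite{Yin2016TrainingTN} and states the formula. Your two-stage optimization (first over the scalar $\alpha$, then a rearrangement over the support of $\bm q$) is precisely the standard derivation one finds in that reference, so there is no substantive difference in approach---you have just written out what the paper leaves as a citation. The care you flag about ties is appropriate and consistent with the paper's convention that $\proj{}{}(\cdot)$ denotes any minimizer.
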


\section{Main Results}
By Lemma \ref{population_loss}, we assume, for the ease of presentation, that the iterates $\{\w^t\}$ are normalized, that is,  we re-define $\w^t$ in Algorithm \ref{alg:bc} by
$$
\w^t=\nproj{}{}(\y^t):= \frac{\proj{}{}(\y^t)}{\norm{\proj{}{}(\y^t)}}
$$ 
Our results extend trivially to the original \ref{coarse} without normalization as the value of $f(\w)$ does not depend on $\|\w\|$. Furthermore, we denote by $\nproj{}{}(\w^*)$ the normalization of the quantization/projection of $\w^*$, $\proj{}{}(\w^*)$, which is a global minimum according to Lemma \ref{population_loss}.
Our main results show that the optimum $\nproj{}{}(\w^*)$ is recurrent as long as $\bm w^*$ is close to its normalized quantization. 

\begin{theorem}\label{recurrent}
Consider the setup of quantization problem (\ref{target}). Let $\cQ$ be either $\R_+ \times \{\pm 1\}^n$ (binary case) or $\R_+ \times \{0,\pm 1\}^n$ (ternary case).
There exists constant $\epsilon>0$ that depends on the weight bit-width and dimension $n$ only, such that for any $\w^*$ with $$0<\norm{\bm w^*-\nproj{}{}(\bm w^*)}<\epsilon,$$ we have $\bm w^t=\nproj{}{}(\w^*)$ for infinitely many $t$ values, where $\{\w^t\}$ is the sequence generated by Algorithm \ref{alg:bc} with any initialization. 
\end{theorem}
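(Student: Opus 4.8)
First I would reduce the recursion to an explicit form. Because the iterates are normalized, $\norm{\w^t}=1$, so Lemma~\ref{cgrad} gives $\tilde{\nabla}f(\w^t)=c\,(\w^t-\w^*)$ with $c:=\constv$, and Algorithm~\ref{alg:bc} becomes $\y^t=\y^{t-1}+c\eta_t(\w^*-\w^{t-1})$, $\w^t=\nproj{}{}(\y^t)$. Telescoping,
\begin{align*}
\y^t = \y^0 + c\,\Lambda_t\,\w^* - c\sum_{s=1}^{t}\eta_s\,\w^{s-1},\qquad \Lambda_t:=\sum_{s=1}^t\eta_s\to\infty .
\end{align*}
Write $\bm q^*:=\nproj{}{}(\w^*)$, which is the global optimum by Lemma~\ref{population_loss} and is uniquely defined once $\epsilon$ is small. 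The whole argument will be a proof by contradiction: I assume $\w^t\neq\bm q^*$ for all $t\ge t_0$, and show that nevertheless $\w^t=\bm q^*$ for all large $t$. Since "$\w^t=\bm q^*$ for infinitely many $t$" is exactly the negation of the assumption, this suffices — in particular there is no separate "upgrade one visit to infinitely many" step.

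Next I would record two geometric facts, both depending only on $n$ and the bit-width. (i) \emph{Margin at the optimum.} When $\norm{\w^*-\bm q^*}<\epsilon$, the point $\w^*$ sits in the interior of the optimal cell $R^*:=\set{\y\neq\bm 0:\nproj{}{}(\y)=\bm q^*}$ with a quantitative margin: there is $\rho=\rho(n,b)>0$ with $B(\w^*,\rho)\subseteq R^*$, hence (as $R^*$ is a cone) $\y\in R^*$ whenever the angle $\angle(\y,\w^*)<\arcsin\rho$. In the binary case $R^*$ is the open orthant with sign pattern $\bsign{\w^*}$; in the ternary case it is cut out by the sign conditions together with the $\argmax$ condition of Lemma~\ref{proj3}, but all the defining inequalities hold at $\w^*$ with slack $\Omega(1/\sqrt{n})-O(\epsilon)$, so the margin estimate reduces the ternary case to the orthant picture. (ii) \emph{Alignment maximality of the projection.} The closed forms in Lemmas~\ref{proj2} and \ref{proj3} show that $\nproj{}{}(\y)\in\argmax\set{\inner{\y}{\bm q}:\bm q\in\widetilde{\cQ}}$, where $\widetilde{\cQ}$ is the finite set of unit-norm quantized states; in particular $\inner{\y^{t-1}}{\w^{t-1}}\ge\inner{\y^{t-1}}{\bm q^*}$. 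I also record the uniform gap $\gamma:=\min\set{1-\inner{\bm q}{\w^*}:\bm q\in\widetilde{\cQ},\,\bm q\neq\bm q^*}$, which is positive (since the hypothesis $0<\norm{\w^*-\bm q^*}$ forces $\w^*\notin\widetilde{\cQ}$) and bounded below by a constant $\gamma_0(n,b)>0$ for $\epsilon$ small.

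The heart of the proof is a dynamical argument obtained by decomposing relative to $\w^*$: set $p_t:=\inner{\y^t}{\w^*}$ and $\y^t_\perp:=\y^t-p_t\w^*$ (using $\norm{\w^*}=1$). From the recursion $p_t-p_{t-1}=c\eta_t(1-\inner{\w^{t-1}}{\w^*})\ge 0$, so $p_t$ is nondecreasing; under the contradiction hypothesis this increment is $\ge c\gamma\eta_t$ for $t>t_0$, whence $p_t\ge c\gamma\Lambda_t(1-o(1))\to\infty$, while trivially $p_t\le\norm{\y^0}+c\Lambda_t$ and $\norm{\y^t}\le\norm{\y^{t_0}}+2c(\Lambda_t-\Lambda_{t_0})$; therefore $\liminf_t\cos\angle(\y^t,\w^*)=\liminf_t p_t/\norm{\y^t}\ge\gamma/2$. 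Now $\y^t_\perp=\y^{t-1}_\perp-c\eta_t\,\w^{t-1}_\perp$, and the key computation is
\begin{align*}
\inner{\y^{t-1}_\perp}{\w^{t-1}_\perp}
&=\inner{\y^{t-1}}{\w^{t-1}}-p_{t-1}\inner{\w^{t-1}}{\w^*}\\
&\ge\inner{\y^{t-1}}{\bm q^*}-p_{t-1}\inner{\w^{t-1}}{\w^*}\\
&\ge p_{t-1}\gamma-\norm{\y^{t-1}}\,\epsilon,
\end{align*}
using alignment maximality, $\inner{\y^{t-1}}{\bm q^*}\ge p_{t-1}-\norm{\y^{t-1}}\epsilon$, and $\inner{\w^{t-1}}{\w^*}\le1-\gamma$. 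Combining with $p_{t-1}\ge\tfrac{\gamma}{4}\norm{\y^{t-1}}$ (valid for $t$ large by the $\liminf$ bound) and $\epsilon<\gamma^2/8$ gives $\inner{\y^{t-1}_\perp}{\w^{t-1}_\perp}\ge\tfrac{\gamma^2}{8}\norm{\y^{t-1}}$, which dominates $\tfrac{c\eta_t}{2}\norm{\w^{t-1}_\perp}^2\le\tfrac{c\eta}{2}$ once $\norm{\y^{t-1}}$ is large. Expanding $\norm{\y^t_\perp}^2=\norm{\y^{t-1}_\perp-c\eta_t\w^{t-1}_\perp}^2$ then shows $\norm{\y^t_\perp}$ is eventually nonincreasing, hence bounded. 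Since $p_t\to\infty$ this forces $\tan\angle(\y^t,\w^*)=\norm{\y^t_\perp}/p_t\to 0$, so for $t$ large $\angle(\y^t,\w^*)<\arcsin\rho$, i.e.\ $\y^t\in R^*$ and $\w^t=\bm q^*$ — contradicting $\w^t\neq\bm q^*$ for $t\ge t_0$. Chasing the constants $\rho,\gamma,\eta$ through the argument shows $\epsilon$ can be taken to depend on $n$ and the bit-width only.

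\textbf{Main obstacle.} The delicate point is turning $\norm{\y_\perp}$ into a genuine Lyapunov-type quantity, i.e.\ establishing $\inner{\y_\perp}{\nproj{}{}(\y)_\perp}\ge 0$. This is false in general and must be extracted from the closeness $\norm{\w^*-\bm q^*}<\epsilon$ together with alignment maximality of $\nproj{}{}$, in a self-bootstrapping way: the positivity needs $p_t$ large so that $\angle(\y^t,\w^*)$ is controlled, which in turn is supplied precisely by the contradiction hypothesis and the gap $\gamma$. A structural difficulty underlying everything is that the coarse gradient is \emph{biased} — it equals $c(\w^t-\w^*)$, not $\nabla f(\w^t)$ — so no descent inequality for $f$ is available and the argument must be purely geometric (tracking which cell $\w^t$ lies in); and since $\eta_t$ need not vanish and $\sum_t\eta_t=\infty$, one cannot hope for $\y^t$ to converge, so recurrence has to be squeezed out of the asymmetric growth "$\norm{\y^t_\perp}$ bounded while $p_t\to\infty$". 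The ternary case contributes only extra bookkeeping for the more intricate description of $R^*$, absorbed by the margin estimate.
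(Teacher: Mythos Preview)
Your argument is correct and takes a genuinely different route from the paper's. The paper proves the binary and ternary cases by separate, structure-specific arguments: in the binary case it tracks each coordinate's sign individually and uses a time-fraction estimate (showing that the weighted fraction of iterations with $w_j^t\neq\hat w_j$ tends to $\alpha_j/(\alpha_j+\beta_j)$, and that these fractions sum to less than $1$ under the stated $\ell_1$-type condition); in the ternary case it first localizes $\bm y^t$ successively to the orthant and then the cone of $\bm w^*$ (Lemmas~\ref{inf_norm}--\ref{secondcut}, \ref{choices}), expresses $\bm y^t$ in the vertex basis $\Lambda(\bm w^*)$, and derives a contradiction from the drift of $\sum_{j\ge 2}\mu_j^t$. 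Your proof, by contrast, is a single unified angle-tracking argument: you decompose $\bm y^t$ into its $\bm w^*$-component $p_t$ and orthogonal part $\bm y^t_\perp$, use the alignment-maximality characterization of $\nproj{}{}$ to get the key inequality $\inner{\bm y_\perp}{\bm w_\perp}\ge p\gamma-\norm{\bm y}\epsilon$, and conclude that $\norm{\bm y^t_\perp}$ stays bounded while $p_t\to\infty$, forcing the angle to zero.

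What each approach buys: your route is more economical and handles both bit-widths at once, with the ternary case requiring no extra work beyond noting that the quantization cells are still the angular Voronoi regions of the finite set $\widetilde{\cQ}$. The paper's route yields finer structural information that yours does not: it identifies exactly which sub-optimal quantized states can be visited infinitely often (the vertex set $\Lambda(\bm w^*)$, cf.\ Remark~\ref{suboptimals} and Lemma~\ref{choices}), and its restated hypotheses---$\sum_{|w_j^*|<1/\sqrt n}|w_j^*-\hat w_j|<2/\sqrt n$ in the binary case and $\sum_{j\ge 2}\lambda_j<1$ in the ternary case---are explicit and somewhat sharper than the bare closeness condition $\norm{\bm w^*-\bm q^*}<\epsilon$ that your constants force. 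Both proofs are ultimately contradictions of the same flavor (assume the optimum is missed from some point on, derive that $\bm y^t$ drifts in a way incompatible with the projection structure), but the Lyapunov quantity differs: the paper monitors coordinates of $\bm y^t$ in adapted bases, whereas you monitor $\norm{\bm y^t_\perp}$ directly.
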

Intuitively, ternary weights should work better than binary weights. The following remark confirms this intuition by showing that the number of points where $\bm w^t$ visits infinitely many times is limited. 
\begin{rmk}\label{suboptimals}
In the ternary case, we can further prove that the sequence $\{\bm w^t\}$ generated by Algorithm \ref{alg:bc} has at most $n$ sub-sequential limits. 
\end{rmk}

\section{Proof Sketch}
On one hand, the binary case is rather simple. We show that part of the coordinates is stable while others have oscillating sign. We further prove that the set of oscillating coordinates is not empty as long as $\bm w^*\not\in\cQ = \R_+ \times \{\pm 1\}^n$ is not quantized.

On the other hand, the proof of the ternary case follows the following steps. Our first step shows the sequence $\bm y^t$ generated by Algorithm \ref{alg:bc} is bounded away from the origin for all but finitely many $t$ values. Then, our second step shows each coordinate of $\bm y^t$ is of the same sign of $\bm w^*$ for all but finitely many $t$ values. This forces $\bm y^t$ to stay in the same orthant to which $\bm w^*$ belongs. As a matter of fact, an $n$-dimensional space has in total $2^n$ orthants, which means $\bm y^t$ can only stay in a small region near $\bm w^*$. After that, our third step furthermore cuts the orthant into $n!$ congruent cones and argue $\bm y^t$ must stay in the same cone where $\bm w^*$ is for all but finitely many $t$ values. In the last step, we prove the ternary case of Theorem \ref{recurrent}, which asserts that as long as the underlying true parameter $\bm w^*$ is close to quantized state $\cQ = \R_+ \times \{0,\pm1\}^n$, i.e., any vertex of the cone it belongs to, the optimum is guaranteed to be recurrent. 

\subsection{Binary Weight}
In view of Lemmas \ref{population_loss} and \ref{proj2}, we have that the normalized optimum of (\ref{target}) is $\frac{1}{\sqrt{n}}\bsign{\bm w^*}$.
The Lemma below shows that some coordinates of $\bm w^t$  generated by Algorithm \ref{alg:bc} have oscillating signs.

\begin{prop}\label{nolimit1}
Let $\bm w^t$ be any infinite sequence generated by Algorithm \ref{alg:bc}.
If $|w_j^*|<\frac{1}{\sqrt{n}}$, then there exist infinitely many $t_1$ and $t_2$ such that $w_j^{t_1}=\frac{1}{\sqrt{n}}$ and $w_j^{t_2}=-\frac{1}{\sqrt{n}}$.
\end{prop}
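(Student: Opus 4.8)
The plan is to collapse the whole dynamics onto the single coordinate $j$ and show its sign cannot eventually freeze. Since after the normalization of Section~3 we have $\norm{\bm w^{t}}=1$, Lemma~\ref{cgrad} makes the update of Algorithm~\ref{alg:bc} read $\bm y^{t}=\bm y^{t-1}-\eta_t c\,(\bm w^{t-1}-\bm w^*)$ with $c:=\frac{\norm{\bm v}^2}{2\sqrt{2\pi}}>0$, while Lemma~\ref{proj2} gives (after normalization) $w_j^{t}=\frac{1}{\sqrt n}$ when $y_j^{t}\ge 0$ and $w_j^{t}=-\frac{1}{\sqrt n}$ when $y_j^{t}<0$. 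Reading off coordinate $j$ yields the scalar recursion
$$ y_j^{t}=y_j^{t-1}-\eta_t c\,(w_j^{t-1}-w_j^*),\qquad w_j^{t-1}\in\Bigl\{\tfrac{1}{\sqrt n},-\tfrac{1}{\sqrt n}\Bigr\}. $$

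The key point is a \emph{sign-reverting drift}. Put $\delta_j:=\frac{1}{\sqrt n}-|w_j^*|>0$, which is positive precisely because $|w_j^*|<\frac{1}{\sqrt n}$. If $w_j^{t-1}=\frac{1}{\sqrt n}$ then $w_j^{t-1}-w_j^*=\frac{1}{\sqrt n}-w_j^*\ge \frac{1}{\sqrt n}-|w_j^*|=\delta_j$, so $y_j^{t}\le y_j^{t-1}-\eta_t c\,\delta_j$; symmetrically, if $w_j^{t-1}=-\frac{1}{\sqrt n}$ then $w_j^{t-1}-w_j^*\le-\delta_j$, so $y_j^{t}\ge y_j^{t-1}+\eta_t c\,\delta_j$. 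Thus whenever $y_j$ is nonnegative it is pushed strictly down by at least $\eta_t c\,\delta_j$, and whenever it is negative it is pushed strictly up by at least $\eta_t c\,\delta_j$.

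The conclusion follows by contradiction. Suppose the sign of $y_j^{t}$ eventually stabilizes, say $y_j^{t}\ge 0$ for all $t\ge T$ (the case $y_j^t<0$ for all $t\ge T$ being identical by symmetry). Then $w_j^{t-1}=\frac{1}{\sqrt n}$ for all $t>T$, so iterating the drift inequality gives $y_j^{t}\le y_j^{T}-c\,\delta_j\sum_{s=T+1}^{t}\eta_s\to-\infty$ because $\sum_t\eta_t=\infty$, contradicting $y_j^{t}\ge 0$. Hence neither $\{t:y_j^{t}\ge 0\}$ nor $\{t:y_j^{t}<0\}$ is cofinite, i.e.\ both are infinite, which is exactly the assertion that $w_j^{t_1}=\frac{1}{\sqrt n}$ for infinitely many $t_1$ and $w_j^{t_2}=-\frac{1}{\sqrt n}$ for infinitely many $t_2$.

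There is no real obstacle here: the statement is essentially a one-line consequence of the closed forms in Lemmas~\ref{cgrad} and~\ref{proj2} once the scalar recursion is isolated. The only things needing care are (i) bounding each step's drift below by a $t$-independent multiple of $\eta_t$, which is where $|w_j^*|<\frac1{\sqrt n}$ and $-w_j^*\ge-|w_j^*|$ enter, and (ii) the tie-breaking convention $y_j=0\mapsto w_j=+\frac1{\sqrt n}$, which is harmless since only the partition of indices into $\{y_j\ge 0\}$ and $\{y_j<0\}$ is used. Note that the boundedness assumption $\eta_t\le\eta$ plays no role in this proposition; only $\sum_t\eta_t=\infty$ is needed.
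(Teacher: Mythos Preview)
Your proof is correct and follows essentially the same approach as the paper's: both isolate the scalar recursion for $y_j^t$, observe that the drift always pushes $y_j^t$ toward the opposite sign by an amount at least a positive constant times $\eta_t$, and conclude from $\sum_t\eta_t=\infty$ that the sign cannot eventually stabilize. The only cosmetic difference is that the paper writes the drift with the exact constants $\alpha=\frac{1}{\sqrt n}-w_j^*$ and $\beta=\frac{1}{\sqrt n}+w_j^*$ and finishes with a one-line appeal to the learning-rate assumption, whereas you use the common lower bound $\delta_j=\frac{1}{\sqrt n}-|w_j^*|\le\min\{\alpha,\beta\}$ and spell out the contradiction explicitly.
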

The above lemma clearly implies that $\bm w^t$ does not converge, as long as $\bm w^*\not\in\cQ$. 
\begin{cor}\label{nolimit1cor}
If $\bm w^*\not\in\cQ$, then any sequence $\set{\bm w^t}$ generated by Algorithm \ref{alg:bc} does not converge. 
\end{cor}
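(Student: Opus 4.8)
The plan is to deduce this directly from Proposition \ref{nolimit1} by exhibiting one coordinate of $\bm w^*$ whose magnitude is strictly less than $\frac{1}{\sqrt{n}}$, and then reading off that the corresponding coordinate of $\bm w^t$ oscillates. Recall from the discussion preceding Proposition \ref{nolimit1} that, since $\norm{\bm w^*}=1$, the normalized optimum of (\ref{target}) in the binary case is $\frac{1}{\sqrt{n}}\bsign{\bm w^*}$, every coordinate of which has magnitude exactly $\frac{1}{\sqrt{n}}$. First I would run a counting argument: were $|w_j^*|\ge\frac{1}{\sqrt{n}}$ to hold for all $j$, then $1=\norm{\bm w^*}^2=\sum_{j=1}^n(w_j^*)^2\ge n\cdot\frac{1}{n}=1$, which forces $|w_j^*|=\frac{1}{\sqrt{n}}$ for every $j$; but equal magnitudes mean $\bm w^*=\frac{1}{\sqrt{n}}\bsign{\bm w^*}\in\cQ=\R_+\times\set{\pm1}^n$, contradicting $\bm w^*\notin\cQ$. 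Hence some index $j$ satisfies $0\le|w_j^*|<\frac{1}{\sqrt{n}}$.

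Next I would invoke Proposition \ref{nolimit1} at this coordinate $j$. It yields infinitely many iterations $t_1$ with $w_j^{t_1}=\frac{1}{\sqrt{n}}$ and infinitely many iterations $t_2$ with $w_j^{t_2}=-\frac{1}{\sqrt{n}}$, so the real sequence $\{w_j^t\}$ has both $+\frac{1}{\sqrt{n}}$ and $-\frac{1}{\sqrt{n}}$ as subsequential limits and cannot converge. Since convergence of $\{\bm w^t\}$ in $\R^n$ forces convergence of each coordinate sequence, $\{\bm w^t\}$ does not converge. The reasoning never refers to $\bm y^0$, so this applies to every sequence produced by Algorithm \ref{alg:bc}, whatever the initialization.

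There is essentially no obstacle beyond the one-line counting estimate above; all of the dynamical work is carried by Proposition \ref{nolimit1}, which is already in hand. The only thing worth stating carefully is the dichotomy itself: either all coordinates of $\bm w^*$ share the common magnitude $\frac{1}{\sqrt{n}}$, so that $\bm w^*\in\cQ$, or at least one coordinate is strictly smaller in magnitude, which is exactly the regime governed by Proposition \ref{nolimit1}.
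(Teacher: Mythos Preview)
Your proof is correct and follows essentially the same approach as the paper's: both first locate a coordinate $j$ with $|w_j^*|<\frac{1}{\sqrt{n}}$ and then invoke Proposition~\ref{nolimit1}. The paper simply asserts that such a $j$ exists, whereas you supply the (correct) norm-counting justification that the only way every coordinate can satisfy $|w_j^*|\ge\frac{1}{\sqrt{n}}$ under $\norm{\bm w^*}=1$ is if $\bm w^*\in\cQ$.
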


Since Algorithm \ref{alg:bc} does not have a limit unless the weights in the network are already quantized, we ask a natural question: Can we guarantee the optimum to be visited infinitely  many times? The general answer is no. We have the following example {\it demonstrating that the optimum may never be achieved}. We refer the proof of the following example to the appendix. 
\begin{ex}\label{periodic}
Let $\bm w^*=\left(\frac{1}{6},\frac{1}{6},\frac{1}{6},\frac{1}{2}\sqrt{\frac{11}{3}}\right)$ so that the best the optimum $\nproj{}{}\bm w^*=\left(\frac{1}{2},\frac{1}{2},\frac{1}{2},\frac{1}{2}\right)$. Let $\eta_t=\eta$, $\lambda=\frac{\eta\norm{\bm v}^2}{6\sqrt{2\pi}}$ and
$$\left\{
\begin{aligned}
&y_1^0\in\left(-\lambda,0\right)\\
&y_2^0\in\left(0,\lambda\right)\\
&y_3^0\in\left(\lambda,2\lambda\right)\\
&y_4^0\in (0,\infty)
\end{aligned}\right.
$$
the sequence $\set{\bm w^t}$ generated by Algorithm \ref{alg:bc} with initialization $\bm y^0$ satisfies
$\bm w^{t+3}=\bm w^t$ and $\bm w^t\not=\nproj{}{}\bm w^*$ for all $t$.
\end{ex}

In the following, we give a sufficient condition for the optimum to be recurrent. The condition requires $\bm w^*$ to be close to $\cQ$. The following result is for the the binary case of Theorem \ref{recurrent}. 
\begin{repthm}{recurrent}[Binary Case]
If the optimum $$\hat{\bm w}:=\nproj{}{}(\w^*)=\frac{1}{\sqrt{n}}\bsign{\bm w^*}$$ of (\ref{target}) satisfies $$0<\sum_{|w_j^*|<\frac{1}{\sqrt{n}}}|w_j^*-\hat{w}_j|<\frac{2}{\sqrt{n}}$$
then there exist infinitely many $t$ values for any sequence $\set{\bm w^t}$ generated by Algorithm \ref{alg:bc} such that $\bm w^t=\nproj{}{}(\bm w^*)$.
\end{repthm}

\begin{figure*}[t]
    \centering
    \begin{tabular}{c}
         \includegraphics[width=0.98\linewidth]{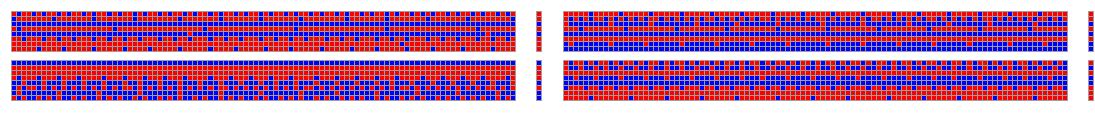} \\
         \includegraphics[width=0.98\linewidth]{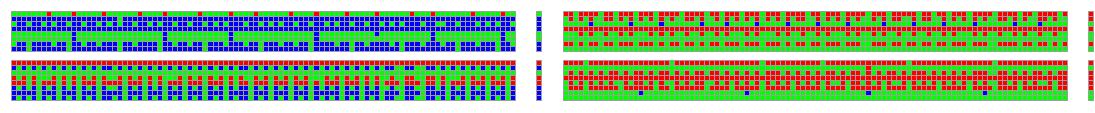}
    \end{tabular}
    \caption{{\bf Evolution of Weight signs of synthetic network described in (\ref{network}).} Each of the 8 large blocks is a colored display of weight sign values via $8\times 100$ matrix (i.e., 8 filter weight signs evolved over the last 100 iterations). The bars to the right of blocks are the corresponding optima.  {\bf Top two rows}: Binary weight signs,  red /blue for  $1$/$-1$.  {\bf Bottom two rows}: Ternary weight signs, red/green/blue for  $1$/$0$/$-1$. }
    \label{fig:toy}
\end{figure*}

\begin{figure*}[t]
    \centering
    \begin{tabular}{c}
         \includegraphics[width=0.9\linewidth]{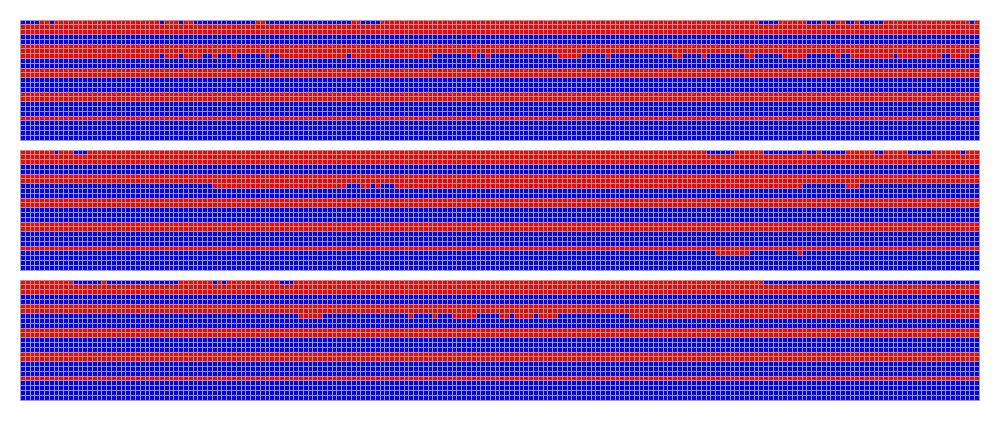} \\[-2ex]
         \includegraphics[width=0.9\linewidth]{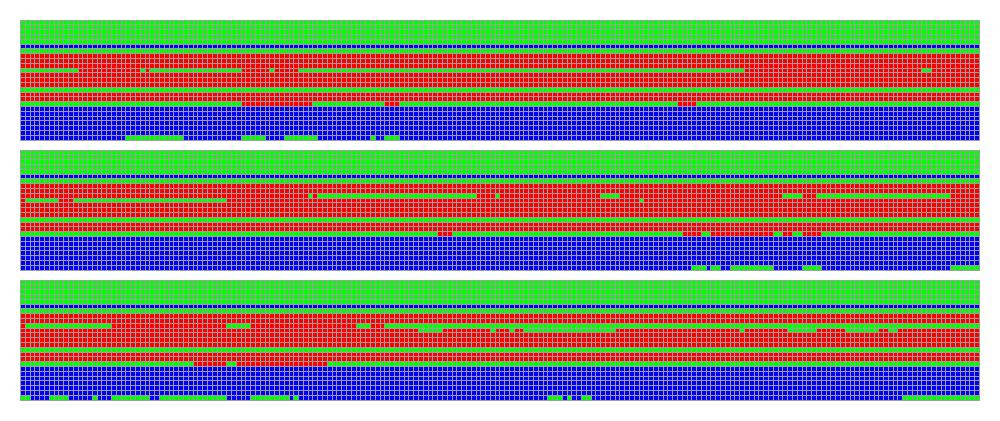}
    \end{tabular}
    \caption{{\bf  Evolution of signs of weight filters in the last training epoch (or 600 iterations) of LeNet-5.} 
    Each of the six $25\times200$ blocks corresponds to evolution of the $5\times5$ convolutional filter over $200$ iterations. {\bf Top three rows}: Binary weights over the last 600 iterations of training, red/blue for sign values $1$/$-1$.  {\bf Bottom three rows}: Ternary weights over the last 600 iterations of training, red/green/blue for sign values $1$/$0$/$-1$. }
    \label{fig:mnist}
\end{figure*}

\begin{figure*}[t]
    \centering
    \begin{tabular}{cc}
         \includegraphics[width=0.4\linewidth]{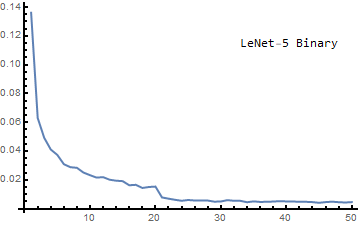} &
         \includegraphics[width=0.4\linewidth]{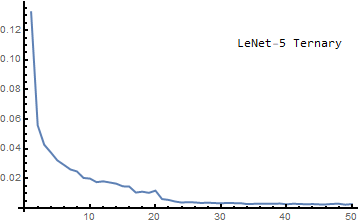}
    \end{tabular}
    \caption{LeNet-5 Training Loss v.s. Epoch. {\bf Left}: Binary weights.  {\bf Bottom}: Ternary weights. }
    \label{fig:lenet_loss}
\end{figure*}

%\begin{figure*}[t]
%    \centering
%    \begin{tabular}{cc}
%         \includegraphics[width=0.48\linewidth]{1b_vgg11.png}&
%         \includegraphics[width=0.48\linewidth]{2b_vgg11.png}
%    \end{tabular}
%    \caption{{\bf VGG-11} {\bf Left}: Binary weights, Light Grey grid means weight $-1$, Black means $1$.  {\bf Right}: Ternary weights, Light Grey means $-1$, Grey means $0$ and Black means $1$.}
%    \label{fig:vgg11}
%\end{figure*}

%\begin{figure*}[t]
%    \centering
%    \begin{tabular}{cc}
%         \includegraphics[width=0.48\linewidth]{1b_res20.png}&
%         \includegraphics[width=0.48\linewidth]{2b_res20.png}
%    \end{tabular}
%    \caption{{\bf ResNet-20} {\bf Left}: Binary weights, Light Grey grid means weight $-1$, Black means $1$.  {\bf Right}: Ternary weights, Light Grey means $-1$, Grey means $0$ and Black means $1$.}
%    \label{fig:res20}
%\end{figure*}

\subsection{Ternary Weights}

%\subsubsection{A Divergent result}\label{ter.1}
The first result shows that $\bm w^t$ generated by Algorithm \ref{alg:bc} is generally divergent, and it converges only when the true parameters $\bm w^*\in\cQ = \R_{+}\times \{0, \pm 1\}^n$.

\begin{prop}[Ternary Case]\label{nolimit2}
Let $\set{\bm w^t}$ be any sequence generated by Algorithm \ref{alg:bc}. If $\bm w^*\not\in\cQ = \R_+ \times \{0,\pm 1\}^n$, then $\set{\bm w^t}$ is not a convergent sequence. 
\end{prop}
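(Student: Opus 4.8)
The plan is a proof by contradiction that turns convergence of $\{\bm w^t\}$ into the forbidden conclusion $\bm w^*\in\cQ$. Observe first that every iterate $\bm w^t=\nproj{}{}(\bm y^t)$ lies in the \emph{finite} set $\mathcal{S}:=\{\bm s/\norm{\bm s}\,:\,\bm s\in\{0,\pm1\}^n\setminus\{\bm 0\}\}$ of normalized ternary directions, so a convergent $\{\bm w^t\}$ must be eventually constant: $\bm w^t=\bm w^\infty$ for all $t\geq T$, with $\bm w^\infty\in\mathcal{S}\subseteq\cQ$. Since $\bm w^*\notin\cQ$, this already gives $\bm d:=\bm w^*-\bm w^\infty\neq\bm 0$.

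Next I would track the auxiliary iterate. By Lemma \ref{cgrad} and $\norm{\bm w^{t-1}}=1$, the gradient step is $\bm y^t=\bm y^{t-1}+\eta_t\,c\,(\bm w^*-\bm w^{t-1})$ with $c:=\norm{\bm v}^2/(2\sqrt{2\pi})>0$; for $t\geq T+1$ this reads $\bm y^t=\bm y^{t-1}+\eta_t\,c\,\bm d$, so telescoping gives $\bm y^t=\bm y^T+c\,S_t\,\bm d$ where $S_t:=\sum_{j=T+1}^t\eta_j$. The learning-rate hypothesis $\sum_t\eta_t=\infty$ forces $S_t\to\infty$; hence the auxiliary variable runs off to infinity along the fixed direction $\bm d$.

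The third step is the variational characterization of the quantizer. Minimizing $\norm{\bm y-\alpha\bm s}$ over $\alpha>0$ for fixed $\bm s$ shows that $\proj{}{}(\bm y)$, hence $\nproj{}{}(\bm y)$, is a maximizer of $\bm u\mapsto\inner{\bm y}{\bm u}$ over $\bm u\in\mathcal{S}$ — this is exactly Lemma \ref{proj3} rewritten. Applying it to $\bm y=\bm y^t$ for $t\geq T+1$ gives, for every $\bm u\in\mathcal{S}$, $\inner{\bm y^T+c\,S_t\,\bm d}{\bm w^\infty-\bm u}\geq 0$; dividing by $S_t$ and letting $t\to\infty$ yields $\inner{\bm d}{\bm w^\infty-\bm u}\geq 0$ for all $\bm u\in\mathcal{S}$. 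Writing this as a family of inequalities, rather than ``$\nproj{}{}(\bm d)=\bm w^\infty$'', is deliberate: it side-steps the possible non-uniqueness (and consequent discontinuity) of the projection near the direction $\bm d$.

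Finally, Cauchy--Schwarz closes the argument. Because $\mathcal{S}$ is antipodally symmetric, taking $\bm u=-\bm w^\infty$ above gives $\inner{\bm d}{\bm w^\infty}\geq 0$. On the other hand $\inner{\bm d}{\bm w^\infty}=\inner{\bm w^*}{\bm w^\infty}-\norm{\bm w^\infty}^2=\inner{\bm w^*}{\bm w^\infty}-1\leq\norm{\bm w^*}\,\norm{\bm w^\infty}-1=0$. Hence $\inner{\bm w^*}{\bm w^\infty}=1=\norm{\bm w^*}\,\norm{\bm w^\infty}$, so equality holds in Cauchy--Schwarz and $\bm w^*=\bm w^\infty\in\cQ$, contradicting $\bm w^*\notin\cQ$. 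Therefore no generated sequence converges, and the same drift-to-infinity argument applies verbatim to the original un-normalized iteration. I expect the only point needing care to be steps two and three: pinning down that $\bm y^t$ drifts in \emph{exactly} the direction $\bm w^*-\bm w^\infty$ and that the quantizer's optimality inequalities survive passage to the limit $S_t\to\infty$ despite ties; once those are secured, the contradiction is immediate and everything else is bookkeeping.
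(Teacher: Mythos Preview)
Your proof is correct and follows essentially the same route as the paper's: both argue by contradiction that eventual constancy $\bm w^t=\bm w^\infty$ forces $\bm y^t$ to drift along $\bm d=\bm w^*-\bm w^\infty$, and both derive a contradiction from the variational characterization of the projection together with Cauchy--Schwarz. The only cosmetic difference is that the paper works directly at finite $t$: since $\bm w^*\neq\bm w^\infty$ gives $\inner{\bm w^*-\bm w^\infty}{\bm w^\infty}<0$ strictly, $\inner{\bm y^t}{\bm w^\infty}\to-\infty$, which immediately contradicts $\bm w^\infty=\nproj{}{}(\bm y^t)$ via Lemma~\ref{proj3}; you instead pass to the limit $S_t\to\infty$, obtain $\inner{\bm d}{\bm w^\infty}\geq 0$, and close by forcing equality in Cauchy--Schwarz. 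Your detour through the full family of inequalities $\inner{\bm d}{\bm w^\infty-\bm u}\geq 0$ is unnecessary since only $\bm u=-\bm w^\infty$ is used, but it does no harm.
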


In what follows, we detail the proof of convergence behavior of Algorithm \ref{alg:bc}. 

%\subsubsection{A bound away from the origin}\label{ter.2}
Our first step is to rule out an exceptional  case that the direction of $\bm y^t$ changes significantly in only one iteration. As shown in Lemma \ref{cgrad}, the coarse gradient is bounded by a constant depending only on the fixed weight vector $\bm v$. So it suffices to show that  $\norm{\bm y^t}$ is bounded away from zero for all but finitely many $t$ values. 
\begin{lemma}\label{inf_norm}
Let $\set{\bm y^t}$ be any auxiliary sequence generated by Algorithm \ref{alg:bc}. If $\bm w^*\not\in\cQ$, then $\norm{\bm y^t}_1$ converges to infinity as $t$ increases.
\end{lemma}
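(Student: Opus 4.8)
The plan is to monitor the scalar $\inner{\bm w^*}{\bm y^t}$, the projection of the auxiliary iterate onto the fixed unit teacher direction, and show it grows without bound; since $\inner{\bm w^*}{\bm y^t}\le\norm{\bm w^*}_\infty\norm{\bm y^t}_1\le\norm{\bm y^t}_1$ (H\"older, and $\norm{\bm w^*}_\infty\le\norm{\bm w^*}_2=1$), this immediately gives $\norm{\bm y^t}_1\to\infty$ (and likewise $\norm{\bm y^t}_2\to\infty$, which is what is needed downstream). First I would unfold the recursion: by Lemma~\ref{cgrad} together with the normalization $\norm{\bm w^{t-1}}=1$, the coarse gradient at $\bm w^{t-1}$ equals $\tilde{\nabla}f(\bm w^{t-1})=c\,(\bm w^{t-1}-\bm w^*)$ with $c:=\frac{\norm{\bm v}^2}{2\sqrt{2\pi}}>0$, so the $\bm y$-update reads $\bm y^t=\bm y^{t-1}+\eta_t c\,(\bm w^*-\bm w^{t-1})$. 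Taking the inner product with $\bm w^*$ and using $\norm{\bm w^*}=1$, then telescoping, yields
\begin{equation*}
\inner{\bm w^*}{\bm y^t}=\inner{\bm w^*}{\bm y^0}+c\sum_{s=1}^{t}\eta_s\bigl(1-\inner{\bm w^*}{\bm w^{s-1}}\bigr).
\end{equation*}

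The key point is a \emph{uniform gap}: each $\bm w^{s-1}$, being a normalized projection onto $\cQ$, lies in the finite set of normalized quantized directions $\{\bm s/\norm{\bm s}:\bm s\in\{0,\pm1\}^n\setminus\{\bm 0\}\}$, and since $\bm w^*\notin\cQ$ with $\norm{\bm w^*}=1$, Cauchy--Schwarz gives $\inner{\bm w^*}{\bm q}<1$ for every $\bm q$ in this finite set. Hence $\rho^*:=\max_{\bm q}\inner{\bm w^*}{\bm q}<1$, so every summand above is at least $1-\rho^*>0$, and the standing assumption $\sum_t\eta_t=\infty$ forces
\begin{equation*}
\inner{\bm w^*}{\bm y^t}\ \ge\ \inner{\bm w^*}{\bm y^0}+c\,(1-\rho^*)\sum_{s=1}^{t}\eta_s\ \longrightarrow\ \infty,
\end{equation*}
which completes the argument via the first-paragraph inequality.

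I do not anticipate a genuine obstacle here; this is the ``easy'' first step. Three small points require care. (i) One must know the generated sequence never hits the origin, so that every $\bm w^t$ is well defined; this is automatic, since $\bm y^t=\bm 0$ would leave $\bm w^t$ (hence $\bm y^{t+1}$) undefined, contradicting that $\{\bm y^t\}$ is an infinite generated sequence. (ii) The gap $\rho^*<1$ should be extracted from finiteness of $\cQ$ modulo positive scaling rather than from a compactness/continuity argument, and this is precisely where the hypothesis $\bm w^*\notin\cQ$ enters: if $\bm w^*\in\cQ$ the gap closes and $\bm y^t$ can stall at the quantized optimum. (iii) The identical computation handles the binary case, replacing $\{0,\pm1\}^n$ by $\{\pm1\}^n$ and the finite direction set by $\{\bm s/\sqrt{n}:\bm s\in\{\pm1\}^n\}$.
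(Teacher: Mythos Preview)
Your proposal is correct and follows essentially the same approach as the paper: both track $\inner{\bm w^*}{\bm y^t}$, use the coarse gradient formula to telescope, and exploit the finiteness of the set of normalized quantized directions together with $\bm w^*\notin\cQ$ to obtain a uniform gap (your $1-\rho^*$ is the paper's $1-\cos\alpha$), then invoke $\sum_t\eta_t=\infty$ to conclude. The only cosmetic differences are that the paper phrases the gap via the minimum angle and passes to $\norm{\bm y^t}_1$ through equivalence of norms rather than H\"older.
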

Lemma \ref{inf_norm} shows that for any positive constant $c>0$, we have $\norm{\bm y^t}_1>c$ for all but finitely many $t$ values. 

%This result is beyond our expectation. 
%\subsubsection{Region Cut-down: Step 1}\label{ter.3}
Since Lemma \ref{inf_norm} guarantees that the direction of $\bm y^t$ will not change significantly, we cut down the region that $\bm y^t$ can belong to in two steps. To describe our first cut down, we need the following definition to make our statement precise. 
\begin{definition}\label{def:orthant}
For any $\bm x\in\mathbb{R}^n$, we define the orthant of $\bm x$ as
$$\bm O(\bm x):=\set{\bm y\in\mathbb{R}^n:\sign{\bm y}=\sign{\bm x}},$$
where $\sign{\cdot}$ acts coordinate-wise. Furthermore, we say $\bm O(\bm x)$ is regular if any coordinate of $\bm x$ is not zero. 
\end{definition}

We state some basic properties of the defined orthant.

\begin{prop}\label{prop:orthant}
For any $\bm x,\bm y\in\mathbb{R}^n$, the following statements are true: 
\begin{enumerate}
    \item Either $\bm O(\bm x)=\bm O(\bm y)$ or $\bm O(\bm x)\cap\bm O(\bm y)=\emptyset$.
    \item $\bm x\in\bm O(\bm x)$.
    \item $\cup_{\bm x\in\mathbb{R}^n}\bm O(\bm x)=\mathbb{R}^n$.
    \item There are in total $3^n$ orthants.
    \item There are in total $2^n$ regular orthants. 
\end{enumerate}
\end{prop}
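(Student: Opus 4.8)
The plan is to observe that the assignment $\bm x \mapsto \sign{\bm x} \in \{-1,0,1\}^n$ completely determines $\bm O(\bm x)$, and to reduce every claim to an elementary statement about the finite set of sign patterns $\{-1,0,1\}^n$. The whole proposition is combinatorial bookkeeping once this reduction is in place.

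First, for item~1 I would argue directly. Suppose $\bm z \in \bm O(\bm x) \cap \bm O(\bm y)$. By definition $\sign{\bm z} = \sign{\bm x}$ and $\sign{\bm z} = \sign{\bm y}$, hence $\sign{\bm x} = \sign{\bm y}$; consequently the defining membership conditions for $\bm O(\bm x)$ and $\bm O(\bm y)$ coincide, so $\bm O(\bm x) = \bm O(\bm y)$. Thus if the two orthants are not disjoint they are equal. Item~2 is immediate, since $\sign{\bm x} = \sign{\bm x}$ means $\bm x$ meets the membership condition for $\bm O(\bm x)$. Item~3 follows from item~2: for every $\bm y \in \mathbb{R}^n$ we have $\bm y \in \bm O(\bm y) \subseteq \bigcup_{\bm x \in \mathbb{R}^n} \bm O(\bm x)$, and the reverse inclusion is trivial.

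For items~4 and~5 I would set up the bijection explicitly. Let $\Phi$ send an orthant $\bm O(\bm x)$ to the sign pattern $\sign{\bm x} \in \{-1,0,1\}^n$; this is well defined on orthants (not just on vectors) because any two representatives of the same orthant share the same sign pattern, and by item~1 distinct sign patterns give disjoint orthants. The map $\Phi$ is injective, since $\sign{\bm x} = \sign{\bm y}$ forces $\bm O(\bm x) = \bm O(\bm y)$, and surjective, since any pattern $s \in \{-1,0,1\}^n$ is realized by the vector whose $i$-th coordinate equals $s_i$. Hence the number of orthants equals $|\{-1,0,1\}^n| = 3^n$. For item~5, $\bm O(\bm x)$ is regular iff no coordinate of $\bm x$ vanishes iff $\sign{\bm x} \in \{-1,1\}^n$, and $\Phi$ restricts to a bijection between regular orthants and $\{-1,1\}^n$, giving $2^n$.

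Since everything is finite and elementary there is no genuine obstacle; the only point deserving a word of care is the well-definedness of $\Phi$ on orthants rather than on vectors, which is exactly what item~1 supplies, so I would make sure to prove items~1--3 before invoking the bijection for items~4--5.
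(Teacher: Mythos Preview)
Your proof is correct. The paper itself does not supply a proof of this proposition; it is stated as a list of basic facts about the orthant partition and left to the reader, so your elementary argument via the bijection with sign patterns in $\{-1,0,1\}^n$ is exactly the kind of verification intended.
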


\begin{lemma}\label{firstcut}
Let $\set{\bm y^t}$ be any auxiliary sequence generated by Algorithm \ref{alg:bc}.
If $\bm w^*\not\in\cQ^n$, then any subsequential limit of $\tilde{\bm y}^t:=\normv{\bm y^t}$ belongs to the closure of $\bm O(\bm w^*)$. Furthermore, if $\bm O(\bm w^*)$ is regular, then $\bm y^t$ lies in $\bm O(\bm w^*)$ for all but finitely many $t$ values. 
\end{lemma}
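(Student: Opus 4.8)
The plan is to first rewrite the iteration in a normal form where the coordinate dynamics is transparent, then establish a one-sided boundedness of every coordinate (which immediately yields the statement about subsequential limits), and finally, in the regular case, upgrade this to strict sign agreement by exploiting the feature that distinguishes ternary from binary quantization: a small coordinate is set to $0$ rather than to $\pm 1/\sqrt n$. Since the iterates are normalized, $\norm{\bm w^{t-1}}=1$, so Lemma~\ref{cgrad} gives $\tilde\nabla f(\bm w^{t-1})=\tfrac{\norm{\bm v}^2}{2\sqrt{2\pi}}(\bm w^{t-1}-\bm w^*)$ and hence
\[
\bm y^{t}=\bm y^{t-1}+c_t(\bm w^*-\bm w^{t-1}),\qquad c_t:=\eta_t\tfrac{\norm{\bm v}^2}{2\sqrt{2\pi}}\in(0,C],
\]
with $C:=\eta\norm{\bm v}^2/(2\sqrt{2\pi})$ and $\sum_t c_t=\infty$; telescoping, $y^t_j=y^0_j+S_t w^*_j-\sum_{s=1}^{t}c_s w^{s-1}_j$ with $S_t=\sum_{s\le t}c_s\to\infty$. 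The structural fact I will use repeatedly is that, by Lemma~\ref{proj3} and the normalization, each $w^{s-1}_j$ is either $0$ (when coordinate $j$ is not among the $j^*_{s-1}$ largest-magnitude entries of $\bm y^{s-1}$) or $\pm 1/\sqrt{j^*_{s-1}}$ carrying the sign of $y^{s-1}_j$; in particular $|w^{s-1}_j|\le 1$ and $\mathrm{sign}(w^{s-1}_j)$ is never opposite to $\mathrm{sign}(y^{s-1}_j)$.

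First I would prove a one-sided bound on each coordinate. If $w^*_j=0$, then $\Delta y^t_j=-c_t w^{t-1}_j$ always points toward $0$ (or is zero) and has size $\le C$, so an induction gives $|y^t_j|\le\max(|y^0_j|,C)$ — coordinate $j$ is bounded. If $w^*_j>0$, then whenever $y^{t-1}_j\le 0$ we have $w^{t-1}_j\le 0$, so $\Delta y^t_j\ge c_t w^*_j>0$, while always $\Delta y^t_j\ge-c_t\ge-C$; an induction then gives $y^t_j\ge-\max(|y^0_j|,C)$, i.e.\ $y^t_j$ is bounded below (symmetrically, bounded above if $w^*_j<0$). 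By Lemma~\ref{inf_norm}, $\norm{\bm y^t}_1\to\infty$, hence $\norm{\bm y^t}\to\infty$; dividing the coordinate bounds by $\norm{\bm y^t}$ shows that any subsequential limit $\bm u$ of $\tilde{\bm y}^t$ satisfies $u_j\ge 0$, $u_j\le 0$, or $u_j=0$ according to the sign of $w^*_j$, i.e.\ $\bm u\in\overline{\bm O(\bm w^*)}$. This settles the first claim, and note that only the hypothesis $\bm w^*\notin\cQ$, through Lemma~\ref{inf_norm}, was used.

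For the regular case all $w^*_j\ne 0$; write $\delta=\min_j|w^*_j|>0$ and fix $j$ with $w^*_j>0$. From the update, $y^t_j$ can decrease only on a \emph{down step}: one on which $y^{t-1}_j>0$ and $j$ is retained with $w^{t-1}_j=1/\sqrt{j^*_{t-1}}>w^*_j$, which forces $j^*_{t-1}<1/(w^*_j)^2=:K_j$ (a fixed integer $\le n$) and makes the decrease at most $c_t\le C$; on every other step $y^t_j$ is nondecreasing, and increases by at least $c_t w^*_j$ whenever $y^{t-1}_j\le 0$ or $j$ is not retained. Hence $y^t_j$ can fall back to $\le 0$ only from a value in $(0,C)$, so only while coordinate $j$ is simultaneously \emph{small} ($|y^{t-1}_j|<C$) and \emph{retained}; but "retained with magnitude $<C$" forces every non-retained coordinate to have magnitude $<C$, so at such a time at most $K_j-1$ coordinates of $\bm y^{t-1}$ have magnitude $\ge C$. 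The crux is that this configuration can recur only finitely often: since $\norm{\bm y^t}_1\to\infty$ we have $\norm{\bm y^t}_\infty\to\infty$, so eventually some coordinate is arbitrarily large, and since $\norm{\bm y^t-\bm y^{t-1}}\le 2c_t\le 2C$ while $\norm{\bm y^t}\to\infty$, the direction $\tilde{\bm y}^t$ changes by $o(1)$ per step and cannot rotate rapidly — together these prevent $\bm y^t$ from repeatedly leaving and re-entering $\{y_j>0\}$, which it provably does in the binary case of Example~\ref{periodic} precisely because binary quantization forces a small coordinate to $\pm 1/\sqrt n$ (pushing $y_j$ the wrong way) whereas ternary quantization sets it to $0$ (pushing $y_j$ up by the full $c_t w^*_j$). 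Thus $y^t_j>0$ eventually for every $j$, so $\bm y^t\in\bm O(\bm w^*)$ for all but finitely many $t$. I expect this last step to be the only real obstacle: making rigorous that "at most $K_j-1$ large coordinates" cannot recur infinitely often requires carefully interlocking the ternary projection rule (small $j^*$ $\Rightarrow$ few retained coordinates), the divergence $\norm{\bm y^t}\to\infty$ of Lemma~\ref{inf_norm}, and the $o(1)$ rotation of $\tilde{\bm y}^t$; everything else is routine bookkeeping, and this is the one place where ternary, not binary, quantization is essential.
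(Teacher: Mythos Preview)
Your argument for the first claim (subsequential limits lie in $\overline{\bm O(\bm w^*)}$) is correct and matches the paper's approach: bounded one-sided deviation on each coordinate, divided by $\norm{\bm y^t}\to\infty$ from Lemma~\ref{inf_norm}.

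The second claim (the regular case) has a genuine gap, and you correctly flag it as the obstacle. First, a concrete error: your parenthetical ``a fixed integer $\le n$'' for $K_j=1/(w^*_j)^2$ is false---when $|w^*_j|<1/\sqrt{n}$ (which is typical for the smallest coordinate), $K_j>n$ and the constraint $j^*_{t-1}<K_j$ is vacuous, so your ``at most $K_j-1$ large coordinates'' deduction carries no information. More fundamentally, the route through counting large coordinates and $o(1)$ rotation of $\tilde{\bm y}^t$ is far more delicate than necessary and, as you admit, not made rigorous.

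The missing idea is a clean \emph{threshold} property of the ternary projection: there is an absolute constant (the paper uses $1/(5n)$) such that $|y_j|<\tfrac{1}{5n}\norm{\bm y}_1$ forces $w_j=0$. This is the paper's Lemma~\ref{zero}, proved by a direct algebraic comparison of $\norm{\bm y_{[j_\delta]}}_1^2/j_\delta$ against $\norm{\bm y_{[j_\delta+k]}}_1^2/(j_\delta+k)$. With this in hand the regular case is a two-line induction (the paper's Lemma~\ref{orthant}): once $\norm{\bm y^t}_1\ge 5nC$ and $y^t_j>0$, either $y^t_j>C$ and the step drops it by at most $C$, or $0<y^t_j\le C<\tfrac{1}{5n}\norm{\bm y^t}_1$ so $w^t_j=0$ and $y^{t+1}_j=y^t_j+c_{t+1}w^*_j>y^t_j$. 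Either way $y^{t+1}_j>0$, and since $y^t_j$ must eventually become positive (your own monotonicity observation), it stays positive. You had the right intuition---``ternary quantization sets it to $0$''---but you need to quantify \emph{when} it does so, and the answer is: whenever the coordinate is a small fraction of $\norm{\bm y}_1$, independent of the other coordinates' configuration.
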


%\subsubsection{Region Cut-down: Step 2}\label{ter.4}
In our previous step, we have partitioned $\mathbb{R}^n$ into orthants and showed that $\bm y^t$ enter into a small neighborhood of the orthant where $\bm w^*$ stays. Now, we prove a stronger result based on the conclusion of our previous step. We would like to cut each orthant into several congruent cones which we shall define later and argue $\bm y^t$ will move and stay in close neighborhood of the cone where $\bm w^*$ stays. This step makes a stronger statement because we manage to shrink the size of the region where $\bm y^t$ can stay. 

\begin{definition}\label{def:cone}
For any non-zero vector $\bm x\in\mathbb{R}^n$, we define the cone of $\bm x$ to be
$$
\begin{aligned}
Cone(\bm x):=\bigg\{\bm y&\in\bm O(\bm x): \\
\sign{|y_j|-|y_i|}&=\sign{|x_j|-|x_i|} \mbox{ for } \forall i,j\in[n]\bigg\}.
\end{aligned}
$$
Moreover, we say $Cone(\bm x)$ is regular if $\bm O(\bm x)$ is regular and any $|x_j|\not=|x_i|$ for all $j\not=i$.
\end{definition}

% Similar as before, we state some basic facts about cone.
\begin{prop}\label{prop:cone}
For any $\bm x,\bm y\in\mathbb{R}^n$, the following statements are true:
\begin{enumerate}
    \item Either $Cone(\bm x)=Cone(\bm y)$ or $Cone(\bm x)\cap Cone(\bm y)=\emptyset$.
    \item $\bm x\in Cone(\bm x)$.
    \item If $\bm y\in Cone(\bm x)$, then $Cone(\bm y)=Cone(\bm x)$.
    \item $\cup_{\bm y\in\bm O(\bm x)}Cone(\bm y)=\bm O(\bm x)$.
    \item Any regular orthant contains $n!$ regular cones. 
\end{enumerate}
\end{prop}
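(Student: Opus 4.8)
The plan is to mirror the argument behind Proposition~\ref{prop:orthant}: work on $\R^n\setminus\{\bm 0\}$ and introduce the relation $\bm x\approx\bm y$ that holds exactly when $\sign{\bm y}=\sign{\bm x}$ and, in addition, $\sign{|y_j|-|y_i|}=\sign{|x_j|-|x_i|}$ for all $i,j\in[n]$. By the very definition of $Cone$, one has $\bm y\in Cone(\bm x)$ iff $\bm x\approx\bm y$. Both defining conditions of $\approx$ are symmetric in $\bm x,\bm y$, so $\approx$ is symmetric; it is reflexive, which is precisely statement~2; and it is transitive, because equality of sign vectors and equality of the integers $\sign{|\cdot_j|-|\cdot_i|}$ are transitive. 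Hence $\approx$ is an equivalence relation whose class containing $\bm x$ is exactly $Cone(\bm x)$. Statements~1 and~3 are then the standard facts that distinct equivalence classes are disjoint and that any point of a class determines that class.

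\textbf{Statement 4.} If $\bm y\in\bm O(\bm x)$ then $\sign{\bm y}=\sign{\bm x}$, hence $\bm O(\bm y)=\bm O(\bm x)$ and so $Cone(\bm y)\subseteq\bm O(\bm y)=\bm O(\bm x)$; taking the union over $\bm y\in\bm O(\bm x)$ gives $\bigcup_{\bm y\in\bm O(\bm x)}Cone(\bm y)\subseteq\bm O(\bm x)$. The reverse inclusion is immediate from statement~2, since each such $\bm y$ lies in $Cone(\bm y)$.

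\textbf{Statement 5.} Fix a regular orthant, i.e.\ the set of all $\bm y$ with $\sign{\bm y}=\bm s$ for a fixed $\bm s\in\{\pm1\}^n$ (so every coordinate of every such $\bm y$ is nonzero). A regular cone inside it is $Cone(\bm x)$ for some $\bm x$ in this orthant with $|x_i|\neq|x_j|$ for all $i\neq j$. For such an $\bm x$ the family $\{\sign{|x_j|-|x_i|}\}_{i,j}$ takes values in $\{\pm1\}$ and records precisely the strict total order on $[n]$ given by $i\prec j\iff|x_i|<|x_j|$ — a genuine total order, being the restriction of $<$ to the $n$ distinct reals $|x_1|,\dots,|x_n|$ transported to indices — hence corresponds to a unique permutation $\pi$ with $|x_{\pi(1)}|<\dots<|x_{\pi(n)}|$. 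Two regular vectors of the orthant lie in the same cone iff they induce the same permutation (directly from the definition of $Cone$), and every permutation $\pi$ is realized, e.g.\ by the regular vector $\bm x$ with $x_i=s_i\,\pi^{-1}(i)$, which lies in the orthant and has distinct magnitudes $1,\dots,n$. Thus the regular cones contained in a given regular orthant are in bijection with the permutations of $[n]$, giving exactly $n!$ of them.

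\textbf{Anticipated obstacle.} Everything except one point in statement~5 is bookkeeping. The point needing care is that the pairwise data $\sign{|x_j|-|x_i|}$ must encode a \emph{transitive} relation, i.e.\ an honest total order rather than an arbitrary pattern of pairwise comparisons on $[n]$; this uses that the $|x_i|$ are real numbers, so $<$ among them is automatically transitive, and it is exactly what makes the count $n!$ neither an over- nor an undercount — no spurious cones arise from inconsistent comparison patterns, and no two distinct permutations collapse to the same cone.
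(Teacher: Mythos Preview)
Your proof is correct. The equivalence-relation framing for statements~1--3 is clean and the permutation count for statement~5 is carried out carefully, including the point you flag about transitivity of the induced order on indices.

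As for comparison with the paper: the paper does \emph{not} prove Proposition~\ref{prop:cone}. It is stated as a list of basic facts about cones (parallel to Proposition~\ref{prop:orthant} for orthants and Proposition~\ref{prop:vertex} for vertices) and left without proof in both the main text and the appendix. Your write-up therefore supplies an argument the authors chose to omit, and there is nothing to compare against beyond confirming that your proof matches the definitions in the paper --- which it does.

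One minor remark: Definition~\ref{def:cone} restricts to nonzero $\bm x$, while the proposition is phrased for all $\bm x,\bm y\in\R^n$; you handle this correctly by working on $\R^n\setminus\{\bm 0\}$, which is the only sensible reading.
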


%As for our second step, we would like to prove Lemma \ref{secondcut} as follows.
\begin{lemma}\label{secondcut}
Let $\set{\bm y^t}$ be any auxiliary real-valued sequence generated by Algorithm \ref{alg:bc}.
If $\bm w^*\not\in\cQ$, then any sub-sequential limit of $\tilde{\bm y}^t:=\normv{\bm y^t}$ belongs to the closure of $Cone(\bm w^*)$. Moreover, if $Cone(\bm w^*)$ is regular, then $\bm y^t\in Cone(\bm w^*)$ for all but finitely many $t$ values.
\end{lemma}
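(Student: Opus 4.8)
The plan is to refine Lemma~\ref{firstcut} by one more level of resolution: there we showed $\bm y^t$ enters the orthant of $\bm w^*$, here we want it to enter the finer region $Cone(\bm w^*)$. Since by Definition~\ref{def:cone} a regular cone lives inside a regular orthant, Lemma~\ref{firstcut} already gives $\bm y^t\in\bm O(\bm w^*)$ for all but finitely many $t$; after a permutation of coordinates and the obvious sign-flip symmetry of the recursion I may assume $w_1^*>w_2^*>\dots>w_n^*>0$ and $y_k^t>0$ for all $k$ and all large $t$, so that $Cone(\bm w^*)=\{\bm y:\ y_1>y_2>\dots>y_n>0\}$. It then suffices to show that for every pair $i<j$ the quantity $d^t:=y_i^t-y_j^t$ is eventually positive, as there are only finitely many pairs. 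From Lemma~\ref{cgrad} and the fact that $\bm w^{t-1}$ is already unit-normed, the iteration is $\bm y^t=\bm y^{t-1}-\lambda_t(\bm w^{t-1}-\bm w^*)$ with $\lambda_t=\eta_t\constv\in(0,\lambda]$ and $\sum_t\lambda_t=\infty$, so
\[
d^t=d^{t-1}+\lambda_t\gamma-\lambda_t\bigl(w_i^{t-1}-w_j^{t-1}\bigr),\qquad \gamma:=w_i^*-w_j^*\in(0,1).
\]

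The next ingredient is that the ternary projection contributes only a \emph{restoring} force to $d^t$. By Lemma~\ref{proj3}, in the positive orthant $w_k^{t-1}\in\{0,1/\sqrt{j^*}\}$ where $j^*$ is the current support size, and $w_i^{t-1}\ne w_j^{t-1}$ only when exactly one of $i,j$ lies in the top-$j^*$ block; in that case $\sign{w_i^{t-1}-w_j^{t-1}}=\sign{y_i^{t-1}-y_j^{t-1}}=\sign{d^{t-1}}$, because a coordinate of larger magnitude is kept whenever a smaller one is. Hence if $d^{t-1}<0$ then $d^t\ge d^{t-1}+\lambda_t\gamma$, i.e. $d$ is pushed strictly upward; and if $d^{t-1}\ge0$ the only way $d^t<d^{t-1}$ is the ``boundary event'' in which $i$ lies in the top-$j^*$ block while $j$ does not, and then the drop is at most $\lambda_t(1/\sqrt{j^*}-\gamma)\le\lambda$.

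The crux — and the step I expect to be the main obstacle — is a quantitative bound showing the boundary event can occur only once $d^{t-1}$ is already large. I would argue as follows. Let $a_1\ge\dots\ge a_n>0$ be the sorted magnitudes of $\bm y^{t-1}$ and $S:=a_1+\dots+a_{j^*}$. Optimality of $j^*$ for $j\mapsto\norm{\bm y_{[j]}}_1^2/j$ against $j^*-1$ gives $a_{j^*}\ge S\bigl(1-\sqrt{1-1/j^*}\bigr)$, and against $j^*+1$ gives $a_{j^*+1}\le S\bigl(\sqrt{1+1/j^*}-1\bigr)$; subtracting and using $\sqrt{1-x}+\sqrt{1+x}\le 2-x^2/4$ yields $a_{j^*}-a_{j^*+1}\ge S/(4j^{*2})$. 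In the boundary event $y_i^{t-1}\ge a_{j^*}$ and $y_j^{t-1}\le a_{j^*+1}$, so $d^{t-1}\ge a_{j^*}-a_{j^*+1}\ge S/(4j^{*2})\ge\norm{\bm y^{t-1}}_1/(4n^3)$ (using $S\ge a_1\ge\norm{\bm y^{t-1}}_1/n$ and $j^*\le n$; the cases $j^*=1$ and $j^*=n$ are handled separately, the latter vacuously since then there is no coordinate outside the block). By Lemma~\ref{inf_norm}, $\norm{\bm y^{t-1}}_1\to\infty$, so there is $T$ with $\norm{\bm y^{t-1}}_1/(4n^3)>2\lambda$ for all $t>T$, and hence for $t>T$ the boundary event forces $d^{t-1}>2\lambda$. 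Combining the regimes for $t>T$: on $d^{t-1}\in[0,2\lambda)$ the boundary event is impossible, so $d^t=d^{t-1}+\lambda_t\gamma>0$; on $d^{t-1}\ge2\lambda$ even a boundary step leaves $d^t\ge2\lambda-\lambda>0$; and on $d^{t-1}<0$, $d$ increases by at least $\lambda_t\gamma$ each step, so after finitely many steps (as $\sum_t\lambda_t=\infty$) it lands in $[0,\lambda(\gamma+1))\subset[0,2\lambda)$ — here I use $\gamma<1$ — and by the first two regimes never returns below $0$. Thus $d^t>0$ for all large $t$, and intersecting over the finitely many pairs proves the regular case.

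For the general subsequential-limit claim (no regularity assumed), Lemma~\ref{firstcut} already forces every subsequential limit $\bm u$ of $\tilde{\bm y}^t$ into $\overline{\bm O(\bm w^*)}$, in particular $u_k=0$ whenever $w_k^*=0$. For a pair with $|w_i^*|>|w_j^*|$ and $w_j^*\ne0$ one reruns the recursion for $|y_i^t|-|y_j^t|$ in limiting form: the restoring/driving split keeps $|y_i^t|-|y_j^t|$ bounded below by an $O(1)$ constant while $\norm{\bm y^t}_1\to\infty$, giving $|u_i|\ge|u_j|$; for a pair with $|w_i^*|=|w_j^*|$ the driving term $\gamma$ vanishes and the one-sided estimates bound $|y_i^t|-|y_j^t|$ on both sides, giving $|u_i|=|u_j|$. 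Together with $\bm u\in\overline{\bm O(\bm w^*)}$ this places $\bm u$ in $\overline{Cone(\bm w^*)}$, completing the argument.
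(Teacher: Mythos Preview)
Your proposal is correct and follows essentially the same strategy as the paper: the paper packages the pairwise-ordering argument into a separate Lemma~\ref{order} (showing $|w_j^*|>|w_i^*|\Rightarrow|y_j^t|>|y_i^t|$ eventually, and $|w_j^*|=|w_i^*|\Rightarrow\big||y_j^t|-|y_i^t|\big|$ bounded), then invokes it together with Lemma~\ref{firstcut} and Lemma~\ref{inf_norm}. Your gap estimate $a_{j^*}-a_{j^*+1}\ge S/(4j^{*2})$ via the $\sqrt{1\pm x}$ bounds is a bit cleaner than the paper's more involved algebra in (\ref{cond1})--(\ref{cond4}), but the mechanism---using $\norm{\bm y^t}_1\to\infty$ to force the ``boundary event'' to require a large pre-existing gap $d^{t-1}$, and then running a three-regime case analysis on $d^{t-1}$---is identical.
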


%\subsubsection{Recurrent Optimum}\label{ter.5}
The auxiliary weight vector $\bm y^t$ can only stay in a small region around $\bm w^*$ for large $t$ values. 

% We brought up two questions:
% \begin{enumerate}
%     \item How many choices do $\bm w^t$ have if the conclusion of Lemma \ref{secondcut} holds?
%     \item Is the optimum recurrent`? 
% \end{enumerate}
% These two questions are answered in this section. We need the following definition.

\begin{definition}\label{def:vertex}
For any point $\bm x\in\mathbb{R}^n$, 
assume $(j_1,j_2,\cdots,j_n)$ is a permutation of $[n]$ such that
$$|x_{j_1}|\geq|x_{j_2}|\geq\cdots\geq|x_{j_n}|$$
We define the set of vertexes of $\bm x$ to be
$$
\begin{aligned}
&\Lambda(\bm x):=\\
&\set{\frac{1}{\sqrt{k}}\sum_{i=1}^k\sign{x_{j_i}}\bm e_{j_i}: x_{j_{k+1}}\not=x_{j_{k}} \mbox{ are nonzeros} }. 
\end{aligned}
$$
\end{definition}

Below are some basic facts about connection between vertexes and cones. 
\begin{prop}\label{prop:vertex}
For any $\bm x,\bm y\in\mathbb{R}^n$ let $k:=|\Lambda(\bm x)|$, the following statements are true: 
\begin{enumerate}
    \item $0\leq k\leq n$.
    \item $\Lambda(\bm x)$ is empty if and only if $\bm x=\bm 0$.
    \item $\Lambda(\bm x)$ is a subset of the boundary of $Cone(\bm x)$.
    \item $Cone(\bm x)=Cone(\bm y)$ if and only if $\Lambda(\bm x)=\Lambda(\bm y)$.
    \item $\nproj{}{} (\bm x)\in\Lambda(\bm x)$.
    \item $\bm y$ lies in $Cone(\bm x)$ if and only if there exists $k$ positive numbers $\set{\mu_{\bm z}(\bm y):\bm z\in\Lambda(\bm x)}$ such that 
    $$\bm y=\sum_{\bm z\in\Lambda(\bm x)}\mu_{\bm z}(\bm y)\bm z.$$
    \item $\bm y$ lies in the closure of $Cone(\bm x)$ if and only if there exists $k$ non-negative numbers $\set{\mu_{\bm z}(\bm y):\bm z\in\Lambda(\bm x)}$ such that 
    $$\bm y=\sum_{\bm z\in\Lambda(\bm x)}\mu_{\bm z}(\bm y)\bm z.$$
    \item $$\mathop{\cup}_{\bm x\in\mathbb{R}^n}\Lambda(\bm x)=\set{\bm x\in\cQ:\norm{\bm x}=1}.$$
\end{enumerate}
\end{prop}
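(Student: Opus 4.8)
The plan is to reduce the whole proposition to one explicit description of $Cone(\bm x)$, $\overline{Cone(\bm x)}$ and $\Lambda(\bm x)$ in terms of the magnitude ordering of $\bm x$. Fix $\bm x\neq\bm 0$ (the case $\bm x=\bm 0$ being trivial throughout), relabel coordinates so that $|x_1|\geq\cdots\geq|x_n|$, put $s_i=\sign{x_i}$, and group the nonzero coordinates into maximal blocks $B_1,\dots,B_r$ of equal magnitude, ordered by decreasing magnitude, with cumulative sizes $0=k_0<k_1<\cdots<k_r=\#\{i:x_i\neq 0\}$. Reading Definition~\ref{def:vertex}, an index $k$ yields a vertex iff $x_{j_k}\neq 0$ and $|x_{j_{k+1}}|\neq|x_{j_k}|$ (with the convention $x_{j_{n+1}}:=0$), so the admissible indices are exactly $k_1,\dots,k_r$, and $\Lambda(\bm x)=\{\bm z_1,\dots,\bm z_r\}$ with $\bm z_\ell:=\tfrac{1}{\sqrt{k_\ell}}\sum_{m\leq\ell}\sum_{i\in B_m}s_i\bm e_i$; in particular $k=|\Lambda(\bm x)|=r$. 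Unwinding Definitions~\ref{def:orthant} and \ref{def:cone}, one checks that $\bm y\in Cone(\bm x)$ iff $\bm y=\sum_{m=1}^r a_m\big(\sum_{i\in B_m}s_i\bm e_i\big)$ for some reals $a_1>a_2>\cdots>a_r>0$, while $\bm y\in\overline{Cone(\bm x)}$ iff the same holds with $a_1\geq\cdots\geq a_r\geq 0$ --- the point being that the equalities $|y_i|=|y_j|$ within a block are closed conditions, so a block never splits in the limit, and a zero coordinate of $\bm x$ stays zero.

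With this picture in hand, items (1), (2), (5) and (8) are short. For (1)--(2): $0\leq r\leq n$, and $\Lambda(\bm x)=\emptyset$ iff $\bm x$ has no nonzero coordinate. For (5) (taking $\bm x\neq\bm 0$), Lemma~\ref{proj3} gives $\nproj{}{}(\bm x)=\tfrac{1}{\sqrt{j^*}}\sign{\bm x_{[j^*]}}$ where $j^*\in\argmax_j\norm{\bm x_{[j]}}_1^2/j$, which already has the form of a candidate vertex, so one only has to verify that $j^*$ is admissible: if $x_{j_{j^*}}=0$ then the index $j^*-1$ strictly improves the objective, and if $|x_{j_{j^*}}|=|x_{j_{j^*+1}}|>0$ then comparing the objective $\norm{\bm x_{[j]}}_1^2/j$ at $j^*-1$ and at $j^*+1$ forces two incompatible bounds on $j^*$; either way $j^*$ cannot be a maximizer, a contradiction. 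For (8): each $\bm z_\ell$ has entries in $\{0,\pm 1/\sqrt{k_\ell}\}$ and unit norm, hence lies in $\{\bm z\in\cQ:\norm{\bm z}=1\}$, which gives one inclusion; conversely, given a unit $\bm u\in\cQ$ of support size $k$, the choice $\bm x:=\bm u$ makes $k$ the only admissible index, so $\Lambda(\bm u)=\{\bm u\}$.

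The heart of the matter is (6)--(7), from which (3) and (4) follow. Substituting $\bm z_\ell=\tfrac{1}{\sqrt{k_\ell}}\sum_{m\leq\ell}(\sum_{i\in B_m}s_i\bm e_i)$ into $\bm y=\sum_\ell\mu_\ell\bm z_\ell$ and swapping the order of summation shows this is equivalent to $a_m=\sum_{\ell\geq m}\mu_\ell/\sqrt{k_\ell}$ for every $m$, which Abel summation inverts uniquely as $\mu_\ell=\sqrt{k_\ell}\,(a_\ell-a_{\ell+1})$ with $a_{r+1}:=0$; hence the $a_m$ being strictly decreasing and positive is equivalent to all $\mu_\ell>0$, which is (6), and their being non-increasing and non-negative is equivalent to all $\mu_\ell\geq 0$, which is (7) --- once one also observes that $\overline{Cone(\bm x)}$ equals the closed (polyhedral) conical hull of $\Lambda(\bm x)$, one direction being continuity and the other following by perturbing the $a_m$ to be strictly decreasing and positive. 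Then (3): by (7) each $\bm z_\ell\in\overline{Cone(\bm x)}$, while a short case analysis on whether $\ell<r$, $\ell=r\geq 2$, or $\ell=r=1$ shows $\bm z_\ell$ is not an interior point of $Cone(\bm x)$, so $\bm z_\ell$ lies on its topological boundary. And (4): the $\bm z_\ell$ are linearly independent (nested supports of distinct sizes), so $\overline{Cone(\bm x)}$ is a simplicial cone whose extreme rays are precisely $\mathbb{R}_{\geq 0}\bm z_\ell$, each containing the single unit vector $\bm z_\ell$; hence $Cone(\bm x)=Cone(\bm y)$ iff $\overline{Cone(\bm x)}=\overline{Cone(\bm y)}$ iff the two closures have the same extreme rays iff $\Lambda(\bm x)=\Lambda(\bm y)$, where the implication ($\Leftarrow$) is also immediate from (6) since equal vertex sets give identical positive-combination descriptions of the two cones. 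The main obstacle is pinning down the block parametrizations of $Cone(\bm x)$ and $\overline{Cone(\bm x)}$ correctly --- the ties and the trailing zero block require care, as does the fact that the word boundary here means topological boundary in $\mathbb{R}^n$, which for a non-regular (lower-dimensional) cone is its whole closure --- together with the Abel-summation change of variables in (6)--(7); beyond that, the argument is essentially bookkeeping, and I expect no deeper difficulty.
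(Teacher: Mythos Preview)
The paper does not supply a proof of this proposition; it is stated as a list of ``basic facts'' about cones and their vertices and then used without argument (e.g.\ in Lemma~\ref{choices} and in the ternary case of Theorem~\ref{recurrent}). So there is no paper proof to compare against.

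Your proposal is essentially complete and correct. The block parametrization $a_1>\cdots>a_r>0$ of $Cone(\bm x)$ and of its closure is the right organizing device, and the Abel-summation change of variables $\mu_\ell=\sqrt{k_\ell}\,(a_\ell-a_{\ell+1})$ delivers (6) and (7) cleanly; items (1), (2), (4), (8) then drop out as you describe. Your verification of (5) --- that any maximizer $j^*$ of $\norm{\bm x_{[j]}}_1^2/j$ must sit at a block boundary $k_\ell$ --- is the one place that needs a short computation, and the two-sided comparison you sketch does give the contradiction (the edge case $j^*=1$ inside a block is handled by comparing with $j^*+1$ alone). One cosmetic caveat on (3): in the degenerate situation $n=1$ with $\bm x\neq 0$, the cone is an open half-line in $\mathbb{R}$ and $\bm z_1$ lies in its interior, so (3) is literally false there; your case ``$\ell=r=1$'' should either note this or simply assume $n\geq 2$, as the paper implicitly does throughout.
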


\begin{lemma}\label{choices}
Let $\set{\bm w^t}$ be the sequence generated by Algorithm \ref{alg:bc}.
If $\bm w^*\not\in\cQ = \R_+ \times \{0,\pm 1\}^n$, then $\bm w^t\in\Lambda(\bm w^*)$ for all but finitely many $t$ values. 
\end{lemma}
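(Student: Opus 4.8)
The plan is to combine Lemma~\ref{secondcut} with the combinatorics of cones and vertices recorded in Propositions~\ref{prop:cone} and~\ref{prop:vertex}. Throughout, write $\tilde{\bm y}^t:=\bm y^t/\norm{\bm y^t}$ (well defined for all large $t$ by Lemma~\ref{inf_norm}) and let $\Omega:=\set{\bm x\in\cQ:\norm{\bm x}=1}$, which is a \emph{finite} set. Since the formulas in Lemmas~\ref{proj2} and~\ref{proj3} are positively homogeneous, $\bm w^t=\nproj{}{}(\bm y^t)$ depends on $\bm y^t$ only through its direction $\tilde{\bm y}^t$, lies in $\Omega$, and by Proposition~\ref{prop:vertex}(5) satisfies $\bm w^t\in\Lambda(\bm y^t)$ for any admissible choice of projection.

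First I would dispose of the generic case. If $Cone(\bm w^*)$ is regular, Lemma~\ref{secondcut} gives $\bm y^t\in Cone(\bm w^*)$ for all but finitely many $t$; then Proposition~\ref{prop:cone}(3) gives $Cone(\bm y^t)=Cone(\bm w^*)$, Proposition~\ref{prop:vertex}(4) upgrades this to $\Lambda(\bm y^t)=\Lambda(\bm w^*)$, and hence $\bm w^t\in\Lambda(\bm y^t)=\Lambda(\bm w^*)$. So the real work is the non-regular case, where $\bm w^*$ has tied coordinate magnitudes or zero entries and Lemma~\ref{secondcut} only controls the \emph{subsequential limits} of $\tilde{\bm y}^t$, not $\bm y^t$ itself.

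For the non-regular case I would argue by contradiction and compactness. Suppose $\bm w^t=\bm u$ for infinitely many $t$, for some fixed $\bm u\in\Omega\setminus\Lambda(\bm w^*)$ (possible since $\Omega$ is finite). Along this index set extract a further subsequence with $\tilde{\bm y}^t\to\bar{\bm y}$; then $\norm{\bar{\bm y}}=1$ and, by Lemma~\ref{secondcut}, $\bar{\bm y}$ lies in the closure of $Cone(\bm w^*)$. Next, observe that the normalized metric projection onto $\cQ$ is a maximizer of $\bm w\mapsto\langle\bm y,\bm w\rangle$ over $\Omega$: minimizing $\norm{\bm y-\alpha\bm s}^2$ over $\alpha\ge 0$ and $\bm s\in\set{0,\pm 1}^n$ reduces to maximizing $\langle\bm y,\bm s\rangle/\norm{\bm s}$, i.e. to maximizing $\langle\bm y,\bm w\rangle$ over $\bm w\in\Omega$, with $\nproj{}{}(\bm y)$ a maximizer. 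Hence $\bm w^t=\bm u$ says $\langle\tilde{\bm y}^t,\bm u\rangle\ge\langle\tilde{\bm y}^t,\bm w\rangle$ for every $\bm w\in\Omega$; letting $t\to\infty$ along the subsequence gives $\langle\bar{\bm y},\bm u\rangle\ge\langle\bar{\bm y},\bm w\rangle$ for every $\bm w\in\Omega$, so $\bm u$ is itself a normalized metric projection of $\bar{\bm y}$ onto $\cQ$, and therefore $\bm u\in\Lambda(\bar{\bm y})$ by Proposition~\ref{prop:vertex}(5).

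It then remains to establish the key combinatorial fact that $\Lambda(\bar{\bm y})\subseteq\Lambda(\bm w^*)$ whenever $\bar{\bm y}\in\overline{Cone(\bm w^*)}\setminus\set{\bm 0}$, which contradicts $\bm u\notin\Lambda(\bm w^*)$ and finishes the proof. Unwinding Definitions~\ref{def:cone} and~\ref{def:vertex}: membership of $\bar{\bm y}$ in the closure of $Cone(\bm w^*)$ forces $\bar{\bm y}$ to share the sign pattern of $\bm w^*$ on its support, with the support a ``prefix'' of the sorted-magnitude list of $\bm w^*$ and every strict magnitude inequality of $\bm w^*$ permitted only to degenerate into an equality; consequently the set of magnitude ``breakpoints'' of $\bar{\bm y}$ is contained in that of $\bm w^*$, and since $\Lambda(\cdot)$ is exactly the set of normalized signed partial sums taken at those breakpoints, $\Lambda(\bar{\bm y})\subseteq\Lambda(\bm w^*)$. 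I expect this non-regular analysis to be the only delicate point — the degeneration/breakpoint bookkeeping above, together with the verification that the limit of the projection inequalities is a genuine metric projection so that Proposition~\ref{prop:vertex}(5) may be invoked at $\bar{\bm y}$; by contrast the regular case and the compactness step are routine. Finally, Remark~\ref{suboptimals} is immediate: once $\bm w^t\in\Lambda(\bm w^*)$ for all but finitely many $t$ and $|\Lambda(\bm w^*)|\le n$ by Proposition~\ref{prop:vertex}(1), the sequence $\set{\bm w^t}$ has at most $n$ subsequential limits.
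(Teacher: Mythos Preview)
Your proposal is correct and follows essentially the same route as the paper: both hinge on the inclusion $\Lambda(\bar{\bm y})\subseteq\Lambda(\bm w^*)$ for $\bar{\bm y}\in\overline{Cone(\bm w^*)}\setminus\{\bm 0\}$ (the paper's steps 1--2, your ``combinatorial fact''), combined with compactness and Lemma~\ref{secondcut}. The only cosmetic differences are that the paper does not split into regular and non-regular cases and phrases the compactness step as the existence of an $\epsilon$-neighborhood of $\overline{Cone(\bm w^*)}\cap\mathcal S^{n-1}$ on which every normalized projection lands in $\Lambda(\bm w^*)$, whereas you unroll that step into an explicit contradiction-via-subsequence argument using the argmax characterization of $\nproj{}{}$.
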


The following result is the ternary case of Theorem \ref{recurrent} stated in section 3.

\begin{repthm}{recurrent}[Ternary Case]
Let $\set{\bm z_j}_{j=1}^k=\Lambda(\bm w^*)$ where ${\bm z_1}=\nproj{}{}\bm w^*$ is the optimum and $\bm w^*=\sum_{j=1}^k\lambda_j\bm z_j$. 
If $$0<\sum_{j=2}^k\lambda_j<1,$$ we have $\bm w^t=\nproj{}{}\bm w^*$ for infinitely many $t$ values, where $\bm w^t$ is any infinite sequence generated by Algorithm \ref{alg:bc} with any initialization. 
\end{repthm}

{\it Intuitively}, the parameter $\lambda_j$ in Theorem \ref{recurrent} stands for the proportion of time that $\{\w^t\}$ stays at $\bm z_j$. For instance, if $\lambda_j\approx1$, then most of $\{\w^t\}$ stay at $\bm z_j$ so that the oscillation has a longer `period' and is harder to observe. On the contrary, if all $\lambda_j$'s are almost the same then $\{\w^t\}$ behaves like uniform distribution and oscillation becomes more obvious. Beside $\lambda_j$'s, a smaller learning rate can render $\bm y^t$ moves slower which can also slow down the oscillation. Although there are ways to stabilize the training process, both our theorem and the experiments in the next section suggests the oscillation behavior is inevitable.  

\section{Experiments}
In this section, we implement \ref{coarse} algorithm on both synthetic data and MNIST/CIFAR image data. Our goals are (1) to validate our theoretical findings and (2) to show the appearance of the oscillation behavior in more complicated setups.
 \textbf{With that said, we emphasize that we did not extensively tune the hyper-parameters or use ad-hoc tricks to achieve the best possible validation accuracy.}  More comprehensive experimental results for \ref{coarse}-based approaches can be found in, for examples, \cite{halfwave_17,pact,Hubara2017QuantizedNN,dorefa_16}. Here we report the validation accuracies on MNIST and CIFAR-10 for fully quantized networks in Table \ref{tab:val_acc}.
For both synthetic and image data sets, we observed the oscillation behavior. 

\subsection{Synthetic Data}
We take $m=4$, $n=8$ in (\ref{network}) and construct $\bm v\sim N(\bm0,\bm I_m)$ and $\w^*\sim N(\bm0,\bm I_n)$ be random vectors. For each run, we fix $\bm v$ and $\bm w^*$ and train the neural network (\ref{network}) by algorithm (\ref{alg:bc}) for $200$ iterations with a learning rate being $0.1$. Fig. \ref{fig:toy} show the evolution of binary/ternary weight of $\w^t$ in the last $100$ iterations. Each block of size $8\times100$ corresponds to the evolution of $\w^t$ during the $100$ iterations. The (quantized) global minimum $\proj{}{}\w^*$ for each run is shown on the right side of the corresponding subplot in Fig. \ref{fig:toy}. 

\subsection{MNIST}

We train LeNet-5 with binary/ternary weights and 4-bit activations using \ref{coarse} algorithm. 
For deep networks, the (quantized) global optimum is generally unknown, we instead show  the oscillating behavior around local optimum. 
Note that Fig. \ref{fig:lenet_loss} shows the training loss no longer drop significantly during the last 30 epochs (50 in total). This suggests the network parameters have reached a local valley. 
However, Fig. \ref{fig:mnist} shows the iterating sequence of model parameters still have oscillating signs towards the end of training. 

Fig. \ref{fig:mnist} shows the evolution of the quantized weights of one convolution filter in the first convolution layer during the last 600 iterations.
To visualize the weights, each quantized filter is reshaped into a 25-dimensional column vector. Each block (3 in a group) of size $25\times200$ corresponds to the evolution of the one filter during 200 iterations.  As we can see from these two figures, a proportion of the weights do not converge to a limit but rather have oscillating signs. 

\subsection{CIFAR-10}

We repeat the  experiments on CIFAR-10 \cite{cifar_09} with ResNet-20/VGG-11. We train ResNet-20 \cite{resnet}/VGG-11 \cite{vgg_14} with binary/ternary weights and 4-bits activation using \ref{coarse} for $200$ epochs. We refer to the appendix for some figures that show similar oscillation behavior. Towards the end of training, although there has been no noticeable decay of training loss, we can still see the oscillating signs of the weights. 

\begin{table}[]
    \centering
    \begin{tabular}{|c|c|c|c|}\hline
                &   float   &   binary  &   ternary \\\hline
    LeNet-5     &  99.37    &   99.33   &   99.34   \\\hline
    ResNet-20   &  92.33    &   89.42   &   90.86   \\\hline
    VGG-11      &  92.15    &   89.47   &   90.91   \\\hline
    \end{tabular}
    \caption{Validation Accuracy of LeNet-5 on MNIST and ResNet-20/VGG-11 on CIFAR-10.}
    \label{tab:val_acc}
\end{table}

\section{Concluding Remarks}
We studied the convergence behavior of widely used \ref{coarse} algorithm \cite{bnn_16,halfwave_17,pact,dorefa_16} for the quantization of one-hidden-layer networks. We showed that the sequence of quantized weights $\{\w^t\}$ generated by \ref{coarse} is generically divergent if the teacher parameters are not in a quantized state, and constructed an explicit example of oscillatory divergence behavior. Under conditions that teacher parameters and their quantized values are close enough, we proved  the recurrence of \ref{coarse} algorithm at the global minimum. 

\section{Acknowledgement}
This work was partially supported by NSF grants IIS-1632935, DMS-1854434, DMS-1924548, and DMS-1924935.

\small

\bibliographystyle{icml2021}
\bibliography{reference}

\appendix
\clearpage
\section*{Appendix}
\begin{replemma}{cgrad}
The expected coarse gradient of $\ell(\bm w;\bm Z)$ w.r.t. $\bm w$ is
\begin{equation*}
\tilde{\nabla}f(\bm w)=\constv\left(\normv{\bm w}-\bm w^*\right).\tag{\ref{cgradeq}}
\end{equation*}
\end{replemma}
\begin{proof}[Proof or Lemma \ref{cgrad}]
\cite{yin2018understanding} gives
$$\tilde{\nabla}f(\bm w)=\frac{\norm{\bm v^*}^2}{\sqrt{2\pi}}\left(\normv{\bm w}-\cos\left(\frac{\theta}{2}\right)\normv{\normv{\bm w}+\bm w^*}\right).$$
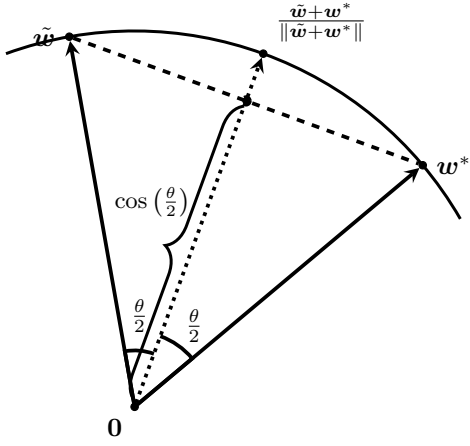
\begin{figure}[ht]
    \centering
    \begin{tikzpicture}
	[
	scale=2.5,
	>=stealth,
	point/.style = {draw, circle,  fill = black, inner sep = 1pt},
	dot/.style   = {draw, circle,  fill = black, inner sep = .2pt},
	]
	\def\rad{2}
	\draw[line width=0.4mm] (30:\rad) arc (30:110:\rad);
	\node (O) at (0,0) [point, label={below left:$\bm 0$}]{};
	\node (wt) at +(100:\rad) [point,label={left:$\tilde{\bm w}$}]{};
	\node (ws) at +(40:\rad) [point,label={right:$\bm w^*$}]{};
	\node (mid) at +(70:\rad) [point,label={above right:$\normv{\tilde{\bm w}+\bm w^*}$}]{};
	\draw[->][line width=0.5mm] (O) -- (wt);
	\draw[->][line width=0.5mm] (O) -- (ws);
	\draw[dotted,->][line width=0.5mm] (O) -- (mid);
	\draw[dashed][line width=0.5mm] (wt) -- (ws) node (cg)[point, midway]{};
	\draw[line width=0.5mm] (0,0) -- (100:.3cm) arc (100:70:.3cm);
	\node (a1) at (85:0.3) [label={above:$\frac{\theta}{2}$}]{};
	\draw[line width=0.5mm] (0,0) -- (40:.4cm) arc (40:70:.4cm);
	\node (a2) at (55:0.3) [label={above right:$\frac{\theta}{2}$}]{};
	\draw [decorate,decoration={brace,amplitude=10pt}][line width=0.4mm] (O) -- (cg) node [black,midway,xshift=-0.5cm,yshift=0.7cm] {\footnotesize $\cos\left(\frac{\theta}{2}\right)$};
	\end{tikzpicture}
    \caption{2-dim section of $\mathbb{R}^n$ spanned by $\tilde{\bm w}$ and $\bm w^*$}
    \label{helperpic}
\end{figure}
Let $\tilde{\bm w}=\normv{\bm w}$, we can easily see from Fig. \ref{helperpic} that the coarse gradient can be further simplified as (\ref{cgradeq})
\end{proof}

\begin{repprop}{nolimit1}
Let $\bm w^t$ be any infinite sequence generated by Algorithm \ref{alg:bc}.
If $|w_j^*|<\frac{1}{\sqrt{n}}$, then there exist infinitely many $t_1$ and $t_2$ values such that $w_j^{t_1}=\frac{1}{\sqrt{n}}$ and $w_j^{t_2}=-\frac{1}{\sqrt{n}}$.
\end{repprop}
\begin{proof}[Proof of Lemma \ref{nolimit1}]
For notational simplicity, since $\norm{w_j^*}<\frac{1}{\sqrt{n}}$, we have
$$\alpha:=\frac{1}{\sqrt{n}}-w_j^*>0\A\beta:=\frac{1}{\sqrt{n}}+w_j^*>0.$$
Using Lemma \ref{proj2} in Algorithm \ref{alg:bc}, we see that
$$
\begin{aligned}
&y_j^{t+1}=y_j^t+\eta_t\constv(w_j^*-w_j^t)\\
=&y_j^t+\eta_t\constv\left(w_j^*-\frac{1}{\sqrt{n}}\bsign{y_j^t}\right),
\end{aligned}
$$
and thus
$$
y_j^{t+1}=\left\{
\begin{aligned}
y_j^t-\eta_t\constv\alpha&~~\text{ if }y_j^t\geq0\\
y_j^t+\eta_t\constv\beta&~~\text{ if }y_j^t<0
\end{aligned}
\right.
$$
Since $y_j^t$ is bounded for each fixed $t\geq0$ and $j\in[n]$, our desired result follows from our assumptions on learning rate $\eta_t$.
\end{proof}

\begin{repcor}{nolimit1cor}
If $\bm w^*\not\in\Q{}{}$, any sequence $\set{\bm w^t}$ generated by Algorithm \ref{alg:bc} does not converge. 
\end{repcor}
\begin{proof}[Proof of Corollary \ref{nolimit1cor}]
Since $\bm w^*\not\in\tQ{1}{n}$, we know there must exist some $j\in[n]$ such that $|w_j^*|<\frac{1}{\sqrt{n}}$ and Proposition \ref{nolimit1} gives our desired result.
\end{proof}

\begin{repex}{periodic}
Let $\bm w^*=\left(\frac{1}{6},\frac{1}{6},\frac{1}{6},\frac{1}{2}\sqrt{\frac{11}{3}}\right)$ so that the best the optimum $\nproj{}{}\bm w^*=\left(\frac{1}{2},\frac{1}{2},\frac{1}{2},\frac{1}{2}\right)$. Let $\eta_t=\eta$, $\lambda=\frac{\eta\norm{\bm v}^2}{6\sqrt{2\pi}}$ and
$$\left\{
\begin{aligned}
&y_1^0\in\left(-\lambda,0\right)\\
&y_2^0\in\left(0,\lambda\right)\\
&y_3^0\in\left(\lambda,2\lambda\right)\\
&y_4^0\in (0,\infty)
\end{aligned}\right.
$$
the sequence $\set{\bm w^t}$ generated by Algorithm \ref{alg:bc} with initialization $\bm y^0$ satisfies
$\bm w^{t+3}=\bm w^t$ and $\bm w^t\not=\nproj{}{}\bm w^*$ for all $t$.
\end{repex}
\begin{proof}[Proof of Example \ref{periodic}]
In order to show the periodicity, it suffices to show $w_j^{t+3}=w_j^t$. Note that
$\tilde{\partial}_{w_4}f(\bm w)<0$
we have $y_4^t>0$ for all $t$ since $w_4^0>0$. It follows that $w_4^{t}=w_4^0=\frac{1}{2}$. Next, we would like to show the periodicity of $w_j^t$ for $j\in[3]$.
Note that
$$y_j^{t+1}=\left\{
\begin{aligned}
&y_j^t+\frac{\eta\norm{\bm v}^2}{2\sqrt{2\pi}}\left(w_j^*+\frac{1}{2}\right)&&\text{ if }y_j^t<0\\
&y_j^t+\frac{\eta\norm{\bm v}^2}{2\sqrt{2\pi}}\left(-w_j^*+\frac{1}{2}\right)&&\text{ if }y_j^t\geq0\\
\end{aligned}
\right.$$
we choose $\bm w_j^*=\frac{1}{6}$ so that with 
$$\lambda=\frac{\eta\norm{\bm v}^2}{6\sqrt{2\pi}}$$ we have
$$y_j^{t+1}=\left\{
\begin{aligned}
&y_j^t+2\lambda&&\text{ if }y_j^t<0\\
&y_j^t-\lambda&&\text{ if }y_j^t\geq0\\
\end{aligned}
\right.$$
Hence, we have
$$\bm w^t=\left\{
\begin{aligned}
&\left(-\frac{1}{2},\frac{1}{2},\frac{1}{2},\frac{1}{2}\right)&&\text{ if }t\equiv0(\text{mod }3)\\
&\left(\frac{1}{2},-\frac{1}{2},\frac{1}{2},\frac{1}{2}\right)&&\text{ if }t\equiv1(\text{mod }3)\\
&\left(\frac{1}{2},\frac{1}{2},-\frac{1}{2},\frac{1}{2}\right)&&\text{ if }t\equiv2(\text{mod }3)\\
\end{aligned}
\right.$$
\end{proof}

\begin{repthm}{recurrent}[Binary Case]
If the optimum $\hat{\bm w}:=\nproj{1}{n}\bm w^*$ of (\ref{target}) satisfies $$\sum_{|w_j^*|<\frac{1}{\sqrt{n}}}|w_j^*-\hat{w}_j|<\frac{2}{\sqrt{n}}$$
then there exists infinitely many $t$ values for any sequence $\set{\bm w^t}$ generated by Algorithm \ref{alg:bc} such that $\bm w^t=\nproj{}{}(\w^*)$.
\end{repthm}
\begin{proof}[Proof of Theorem \ref{recurrent} on $b=1$]
Without loss of generality, we can assume $w_j^*\geq0$ for all $j\in[n]$ so that $\hat{w}_j=\frac{1}{\sqrt{n}}$ for all $j$.

Firstly, if $w_j^*>\frac{1}{\sqrt{n}}$, we know
$$y_j^{t+1}=y_j^t+\eta_t\constv\left(w_j^*-w_j^t\right)\geq w_j^t+\eta_t\constv\left(w_j^*-\frac{1}{\sqrt{n}}\right),$$
so that
$$y_j^t\geq y_j^0+\constv\left(\sum_{s=0}^{t-1}\eta_s\right)\left(w_j^*-\frac{1}{\sqrt{n}}\right)$$
where the right hand side goes to infinity and thus $w_j^t=\hat{w}_j$ for all but finitely many $t$ values.

Secondly, if $w_j^*=\frac{1}{\sqrt{n}}$, we know when $w_j^t<0$: 
$$
y_j^{t+1}=y_j^t+\eta_t\constv\left(w_j^*-w_j^t\right)= y_j^t+\eta_t\constv\frac{2}{\sqrt{n}}
$$
holds so that there must exist some $t$ such that $y_j^t>0$. Once $y_j^t>0$ we have $w_j^*=w_j^t$ so that $y_j^{t+1}=y_j^{t}$ and hence $w_j^t=\hat{w}_j$ for all but finitely many $t$ values.

Third, if $w_j^*<\frac{1}{\sqrt{n}}$, we have  $y_j^t\cdot\tilde{\partial}_jf(\bm w^t)>0$ so that $y_j^t$ is increasing when $y_j^t<0$ and decreasing when $y_k^t>0$. This tells us $y_j^t$ is bounded uniformly in $t$. Furthermore,
$$
\begin{aligned}
y_j^t=y_j^0+\constv\Bigg[&\left(\sum_{s=0}^{t-1}\mathds{1}_{\left\{w_j^s>0\right\}}\eta_s\right)\left(w_j^*-\frac{1}{\sqrt{n}}\right)\\
&+\left(\sum_{s=0}^{t-1}\mathds{1}_{\left\{w_j^s<0\right\}}\eta_s\right)\left(w_j^*+\frac{1}{\sqrt{n}}\right)\Bigg].
\end{aligned}
$$
For notation simplicity, we let
$$\alpha_j=\frac{1}{\sqrt{n}}-w_j^*>0\A\beta_j=w_j^*+\frac{1}{\sqrt{n}}>0,$$
$$a_j^t=\frac{1}{t}\sum_{s=0}^{t-1}\ind{w_j^s>0}\eta_s\A b_j^t=\frac{1}{t}\sum_{s=0}^{t-1}\ind{w_j^s<0}\eta_s.$$
Now, we have
$$\frac{y_j^t-y_j^0}{t}=\constv\left(-\alpha_ja_j^t+\beta_jb_j^t\right).$$
Since $y_j^t$ is bounded for all $w_j^*<\frac{1}{\sqrt{n}}$, we let $t\rightarrow\infty$ so that left hand side vanishes and
$$\lim_{t\rightarrow\infty}\frac{b_j^t}{a_j^t+b_j^t}=\frac{\alpha_j}{\alpha_j+\beta_j}.$$
By assumption, we have
$$\lim_{t\rightarrow\infty}\sum_{j=1}^n\frac{b_j^t}{a_j^t+b_j^t}=\sum_{j=1}^n\frac{\alpha_j}{\alpha_j+\beta_j}<1.$$
Hence, we know
$$\lim_{t\rightarrow\infty}\sum_{s=0}^{t-1}\ind{\w^s=\hat{\w}^*}\eta_s\geq \lim_{t\rightarrow\infty} \left[\left(1-\sum_{j=1}^n\frac{b_j^t}{a_j^t+b_j^t}\right)\sum_{s=0}^{t-1}\eta_s\right]=\infty,$$
where we used the assumption $\sum_{t=0}^\infty\eta_t=\infty$. Now, the desired result follows.
\end{proof}

\begin{repprop}{nolimit2}[Ternary Case]
Let $\bm w^t$ be any sequence generated by Algorithm \ref{alg:bc}. If $\bm w^*\not\in\cQ$, then $\set{\bm w^t}$ is not a converging sequence. 
\end{repprop}
\begin{proof}[Proof of Proposition \ref{nolimit2}]
We prove by contradiction. Observe that $\cQ\cap\mathcal{S}^{n-1}$ is a finite set, we know $\bm w^t$ converges to $\bm w^\infty$ is equivalent to $\bm w^t=\bm w^\infty$ for all but finitely many $t$ values. Assume $\bm w^t=\bm w^\infty$ for all but finitely many $t$ values, we know there exists some $T\geq0$ such that $\bm w^t=\bm w^\infty$ for all $t\geq T$. Thus,
$$
\begin{aligned}
\bm y^{T+t}&=\bm y^T-\sum_{s=0}^{t-1}\eta_{T+s}\tilde{\nabla}f\left(\bm w^{T+s}\right)\\
&=\bm y^T-\left(\sum_{s=0}^{t-1}\eta_{T+s}\right)\tilde{\nabla}f\left(\bm w^\infty\right)\\
&=\bm y^t+\left(\sum_{s=0}^{t-1}\eta_{T+s}\right)\constv\left(\bm w^*-\bm w^\infty\right).
\end{aligned}
$$
Now, we have
$$\inner{\bm y^{T+t}}{\bm w^\infty}=\inner{\bm y^T}{\bm w^\infty}+\left(\sum_{s=0}^{t-1}\eta_{T+s}\right)\constv\inner{\bm w^*-\bm w^\infty}{\bm w^\infty}$$
where 
$$\inner{\bm w^*-\bm w^\infty}{\bm w^\infty}=\inner{\bm w^*}{\bm w^\infty}-1<0.$$
Note that $\sum_{s=0}^{\infty}\eta_{T+s}=\infty$, there exists some $T_1(T)$, such that for all $t>T_1(T)$
$$\inner{\bm y^t}{\bm w^\infty}<0.$$
This contradicts Lemma \ref{proj3} and our desired result follows.
\end{proof}

\begin{replemma}{inf_norm}
Let $\set{\bm y^t}$ be any auxiliary sequence generated by Algorithm \ref{alg:bc}. If $\bm w^*\not\in\cQ$, then $\norm{\bm y^t}_1$ converges to infinity as $t$ increases. 
\end{replemma}
\begin{proof}[Proof of Lemma \ref{inf_norm}]
$\cQ\cap\mathcal{S}^{n-1}$ is a compact set because it is finite. Also, since $\cQ$ is symmetric, $\bm w^*\not\in\cQ$ also implies $-\bm w^*\not\in\cQ$. It follows that
$$\alpha:=\inf_{\bm w\in\cQ\cap\mathcal{S}^{n-1}}\theta\left(\bm w^*,\bm w\right)\in(0,\pi).$$
Hence, for any $\bm w\in\cQ\cap\mathcal{S}^{n-1}$ we have
$$\inner{-\tilde{\nabla}f(\bm w)}{\bm w^*}=\constv\inner{\bm w^*-\bm w}{\bm w^*}\geq\constv\left(1-\cos\alpha\right).$$
Now, we know
$$
\begin{aligned}
&\inner{\bm y^T}{\bm w^*}=\inner{\bm y^0}{\bm w^*}+\sum_{t=0}^{T-1}\eta_t\inner{-\tilde{\nabla}f\left(\bm w^t\right)}{\bm w^*}\\
&\geq\inner{\bm y^0}{\bm w^*}+\left(\sum_{t=0}^{T-1}\eta_t\right)\constv\cdot \left(1-\cos\alpha\right).
\end{aligned}
$$
Let $T\rightarrow\infty$, we see that $\lim_{t\rightarrow\infty}\norm{\bm y^t}=\infty$ which is equivalent to $\lim_{t\rightarrow\infty}\norm{\bm y^t}_1=\infty$.
\end{proof}

\begin{lemma}\label{zero}
Let $\bm w=\proj{}{}(\bm y)$, then $|y_j|<\frac{1}{5n}\norm{\bm y}_1$ implies $w_j=0$.
\end{lemma}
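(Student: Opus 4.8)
The plan is to use the ternary projection formula of Lemma \ref{proj3} together with two instances of the optimality of $j^*$ to show that the smallest magnitude retained by the projection is already at least $\frac{1}{2n}\norm{\bm y}_1$, which beats the threshold $\frac{1}{5n}\norm{\bm y}_1$ with room to spare. Throughout, relabel the coordinates of $\bm y$ as $y_{(1)},\dots,y_{(n)}$ with $|y_{(1)}|\ge\cdots\ge|y_{(n)}|$, and set $A:=\norm{\bm y_{[j^*]}}_1=\sum_{i=1}^{j^*}|y_{(i)}|$ and $s:=|y_{(j^*)}|$, so that $\norm{\bm y_{[j^*-1]}}_1=A-s$. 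We may assume $\bm y\ne\bm{0}$, else the hypothesis is vacuous. By Lemma \ref{proj3}, $(\bm y_{[j^*]})_j\ne0$ --- equivalently $w_j\ne0$ --- holds only when $j$ indexes one of the $j^*$ largest-magnitude entries of $\bm y$, so $w_j\ne0$ forces $|y_j|\ge s$; it therefore suffices to prove $s>\frac{1}{5n}\norm{\bm y}_1$ and then invoke the contrapositive.

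\textbf{Key estimate.} First I would test the maximizer $j^*$ against the choice $j=n$, for which $\norm{\bm y_{[n]}}_1=\norm{\bm y}_1$: this gives $\frac{A^2}{j^*}\ge\frac{\norm{\bm y}_1^2}{n}$, i.e. $A\ge\sqrt{j^*/n}\,\norm{\bm y}_1$. If $j^*\ge2$, I would then test $j^*$ against $j=j^*-1$: $\frac{A^2}{j^*}\ge\frac{(A-s)^2}{j^*-1}$, which expands and rearranges to $A^2-2j^*As+j^*s^2\le0$, hence $A^2\le2j^*As$ and (as $A>0$) $A\le2j^*s$. Combining the two bounds, $s\ge\frac{A}{2j^*}\ge\frac{1}{2\sqrt{nj^*}}\norm{\bm y}_1\ge\frac{1}{2n}\norm{\bm y}_1$, using $j^*\le n$. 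When $j^*=1$ the bound is immediate, since then $A=s=|y_{(1)}|$ and the $j=n$ comparison already gives $s\ge\frac{1}{\sqrt n}\norm{\bm y}_1\ge\frac{1}{2n}\norm{\bm y}_1$. In every case $s\ge\frac{1}{2n}\norm{\bm y}_1>\frac{1}{5n}\norm{\bm y}_1$.

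\textbf{Conclusion and main obstacle.} To finish: if $|y_j|<\frac{1}{5n}\norm{\bm y}_1$, then $|y_j|<s=|y_{(j^*)}|$, so coordinate $j$ has strictly smaller magnitude than each of the $j^*$ entries kept by the projection; thus $(\bm y_{[j^*]})_j=0$ and $w_j=\frac{\norm{\bm y_{[j^*]}}_1}{j^*}\sign{(\bm y_{[j^*]})_j}=0$, irrespective of how ties are resolved in selecting the top $j^*$ entries. The only substantive point in the whole argument is the inequality $A\le2j^*s$: the trick is to exploit the optimality of $j^*$ not merely against the full index set $n$ but also against $j^*-1$, after which the remaining steps are elementary algebra and bookkeeping. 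The slack between the $\frac1{2n}$ actually obtained and the stated $\frac1{5n}$ means no care with constants is needed.
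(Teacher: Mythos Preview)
Your proof is correct, and in fact cleaner than the paper's. Both arguments exploit the optimality of $j^*$ in $\max_j\norm{\bm y_{[j]}}_1^2/j$, but in different ways. The paper fixes the threshold $\delta=\tfrac{1}{5n}\norm{\bm y}_1$, lets $j_\delta$ count the coordinates with $|y_i|\ge\delta$, and then shows directly that $j^*\le j_\delta$ by verifying $\norm{\bm y_{[j_\delta]}}_1^2/j_\delta>\norm{\bm y_{[j_\delta+k]}}_1^2/(j_\delta+k)$ for every $k\ge1$; this requires a chain of estimates culminating in a quadratic-in-$\tau$ inequality with $\tau=\norm{\bm y_{[j_\delta+k]}}_1/(n\delta)\ge4$. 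Your route instead bounds the smallest retained magnitude $s=|y_{(j^*)}|$ from below by testing $j^*$ against just two alternatives, $j=n$ and $j=j^*-1$, obtaining $A\ge\sqrt{j^*/n}\,\norm{\bm y}_1$ and $A\le 2j^*s$, hence $s\ge\tfrac{1}{2n}\norm{\bm y}_1$. This is shorter, yields the sharper constant $\tfrac{1}{2n}$ rather than $\tfrac{1}{5n}$, and makes the role of the two comparisons transparent; the paper's argument, on the other hand, works at the level of the objective values and never isolates $s$ explicitly.
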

\begin{proof}[Proof of Lemma \ref{zero}]
Without loss of generality, we assume $y_i\geq0$ for all $i\in[n]$ and $y_j<\frac{1}{5n}\norm{\bm y}_1$ for a fixed $j\in[n]$. Let $\delta=\frac{1}{5n}\norm{\bm y}_1$ and
$$j_\delta:=|\set{i\in[n]:|y_i|\geq\delta}|$$
we know $j_\delta\geq1$ by the principle of drawer. 
Now, with
$$j^*=\argmax\frac{\norm{\bm y_{[j]}}_1^2}{j}$$
for any $1\leq k\leq n-j_\delta$
$$
\begin{aligned}
&\frac{\norm{\bm y_{[j^*]}}_1^2}{j^*}-\frac{\norm{\bm y_{[j_\delta+k]}}_1^2}{j_\delta+k}\\
\geq&\frac{\norm{\bm y_{[j_\delta]}}_1^2}{j_\delta}-\frac{\norm{\bm y_{[j_\delta+k]}}_1^2}{j_\delta+k}\\
=&\frac{\left(j_\delta+k\right)\norm{\bm y_{[j_\delta]}}_1^2-j_\delta\norm{\bm y_{[j_\delta+k]}}_1^2}{j_\delta\left(j_\delta+k\right)},
\end{aligned}
$$
where the numerator is
$$
\begin{aligned}
&k\norm{\bm y_{[j_\delta]}}_1^2-j_\delta\left(\norm{\bm y_{[j_\delta+k]}}_1^2-\norm{\bm y_{[j_\delta]}}_1^2\right)\\
\geq&k\left[\norm{\bm y_{[j_\delta]}}_1^2-j_\delta\delta\left(\norm{\bm y_{[j_\delta+k]}}_1+\norm{\bm y_{[j_\delta]}}_1\right)\right].
\end{aligned}
$$

With $\tau=\frac{\norm{\bm y_{[j_\delta+k]}}_1}{n\delta}$, we have
$$
\begin{aligned}
&k\left[\norm{\bm y_{[j_\delta]}}_1^2-j_\delta\delta\left(\norm{\bm y_{[j_\delta+k]}}_1+\norm{\bm y_{[j_\delta]}}_1\right)\right]\\
\geq&k\left[\left(\norm{\bm y_{[j_\delta+k]}}_1-k\delta\right)^2-2n\delta\norm{\bm y_{[j_\delta+k]}}_1\right]\\
=&k\left(n\delta\right)^2\left(\tau^2-4\tau+1\right).
\end{aligned}
$$
Note that 
$$
\tau=\frac{\norm{\y_{[j_\delta+k]}}_1}{n\delta}\geq\frac{\norm{\bm y}_1-n\delta}{n\delta}\geq4,
$$
we conclude that 
$$\frac{\norm{\bm y_{[j^*]}}_1^2}{j^*}>\frac{\norm{\bm y_{[j_\delta+k]}}_1^2}{j_\delta+k}$$
and hence $j^*\leq j_\delta$. Now, Lemma \ref{proj3} gives $w_j=0$.
\end{proof}

\begin{lemma}\label{orthant}
Let $\set{\bm w^t}$ and $\set{\bm y^t}$ be the sequence and the auxiliary sequence generated by algorithm \ref{alg:bc}.
Assume $\bm w^*\not\in\cQ$, the following statements hold.
\begin{itemize}
    \item If $w_j^*=0$, then $y_j^t$ is bounded and $w_j^t=0$ for all but finitely many $t$ values.
    \item If $w_j^*\not=0$, then $\sign{y_j^t}=\sign{w_j^*}$ for all but finitely many $t$ values.
\end{itemize}
\end{lemma}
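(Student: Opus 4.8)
The plan is to reduce everything to the single-coordinate recursion produced by Algorithm~\ref{alg:bc}. Since the iterates are unit-normed, Lemma~\ref{cgrad} gives $\tilde\nabla f(\bm w^t)=\constv(\bm w^t-\bm w^*)$, hence
\[
y_j^{t+1}=y_j^t+\eta_t\constv\bigl(w_j^*-w_j^t\bigr).
\]
Two structural facts will be used repeatedly: by Lemma~\ref{proj3}, $w_j^t$ has the same sign as $y_j^t$ (and $w_j^t=0$ when $y_j^t=0$); and $|w_j^t|\le\norm{\bm w^t}=1$, $|w_j^*|\le\norm{\bm w^*}=1$. Put $M:=\max\{|y_j^0|,\eta\constv\}$, and, using Lemma~\ref{inf_norm}, fix $T$ with $\norm{\bm y^t}_1>5nM$ for all $t\ge T$; then Lemma~\ref{zero} ensures that for such $t$, any coordinate with $|y_j^t|\le M$ satisfies $w_j^t=0$. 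When $w_j^*=0$ the recursion is $y_j^{t+1}=y_j^t-\eta_t\constv w_j^t$; since $w_j^t$ carries the sign of $y_j^t$, each step pushes $y_j^t$ toward the origin and overshoots it by at most $\eta_t\constv|w_j^t|\le\eta\constv$, so an induction gives $|y_j^t|\le M$ for all $t$ (the boundedness claim), and for $t\ge T$ this forces $w_j^t=0$ by Lemma~\ref{zero}.

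For $w_j^*\ne0$, flip the $j$-th coordinate (the dynamics is sign-equivariant coordinatewise: replace $y_j^0$ and $w_j^*$ by their negatives) to assume $w_j^*>0$. The same trapping argument gives the one-sided bound $y_j^t\ge-M$ for all $t$: if $y_j^t\le0$ then $w_j^t\le0$, so $y_j^{t+1}\ge y_j^t+\eta_t\constv w_j^*>y_j^t$, while from $y_j^t>0$ a step drops by at most $\eta_t\constv|w_j^t|\le\eta\constv\le M$. Next, nonnegativity is absorbing past $T$: if $0\le y_j^t\le M$ then $w_j^t=0$ (Lemma~\ref{zero}), so $y_j^{t+1}=y_j^t+\eta_t\constv w_j^*\ge0$, and if $y_j^t>M\ge\eta\constv$ then $y_j^{t+1}\ge y_j^t-\eta\constv>0$; hence once $y_j^{t_0}\ge0$ for some $t_0\ge T$, we get $y_j^t\ge0$ for all $t\ge t_0$. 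Finally $y_j^t$ cannot stay negative after $T$, since while $y_j^t<0$ we have $|y_j^t|\le M$, so $w_j^t=0$, so $y_j^{t+1}=y_j^t+\eta_t\constv w_j^*$, and $\sum_t\eta_t=\infty$ would send $y_j^t\to+\infty$, a contradiction. Therefore $y_j^{t_0}\ge0$ for some $t_0\ge T$, and $\sign{y_j^t}=1=\sign{w_j^*}$ for all $t\ge t_0$ (if $y_j^t=0$ should recur, the next iterate equals $\eta_t\constv w_j^*>0$, so it is left at once).

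The one genuinely delicate point is the absorbing step in the case $w_j^*\ne0$: a priori $y_j^t$ could be knocked just below $0$ by a large downward kick, since $|w_j^t|$ may be as large as $1$ when $\bm w^t$ has few nonzero entries. This is resolved exactly by Lemma~\ref{zero}: once $\norm{\bm y^t}_1$ has grown past $5nM$ (which it does, by Lemma~\ref{inf_norm}), any near-zero value of $y_j^t$ already has $w_j^t=0$, so there is no kick at that scale and the sign can only be corrected, never re-broken. The remaining pieces --- boundedness through the sign matching of Lemma~\ref{proj3}, escape to $+\infty$ through $\sum_t\eta_t=\infty$, and $\norm{\bm y^t}_1\to\infty$ from Lemma~\ref{inf_norm} --- are routine. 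This lemma is precisely what will feed the orthant-containment statement Lemma~\ref{firstcut}.
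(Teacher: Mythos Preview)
Your proof is correct and follows essentially the same route as the paper's: reduce to the coordinate recursion $y_j^{t+1}=y_j^t+\eta_t\constv(w_j^*-w_j^t)$, bound $|y_j^t|$ by $M=\max\{|y_j^0|,\eta\constv\}$ in the $w_j^*=0$ case, and combine Lemma~\ref{inf_norm} with Lemma~\ref{zero} to force $w_j^t=0$; for $w_j^*\ne0$, show the sign region is absorbing once $\norm{\bm y^t}_1$ is large, using Lemma~\ref{zero} on the small-positive strip. The one cosmetic difference is that for the escape to positivity you first establish the global one-sided bound $y_j^t\ge -M$ and then invoke Lemma~\ref{zero} on the negative strip to make the increment exactly $\eta_t\constv w_j^*$, whereas the paper uses the simpler observation $y_j^t\le0\Rightarrow w_j^t\le0$ (hence increment $\ge\eta_t\constv w_j^*$) and appeals to $\sum_t\eta_t=\infty$ directly; both arrive at the same place.
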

\begin{proof}[Proof of Lemma \ref{orthant}]
On the one hand, we consider the case $w_j^*=0$, so that
$$y_j^{t+1}=y_j^t+\eta_t\constv\left(w_j^*-w_j^t\right)=y_j^t-\eta_t\constv w_j^t.$$
Note that Lemma \ref{proj3} shows $y_j^t$ and $w_j^t$ are of the same sign if $w_j^t\not=0$, we know $y_j^t$ is bounded by $C_j:=\max\set{|y_j^0|,\eta\constv}$. Moreover Lemma \ref{inf_norm} shows $\norm{\bm y^t}_1>5nC_j$ for all but finitely many $t$ values. Finally, we see from Lemma \ref{zero} that $w_j^t=0$ for all but finitely many $t$ values.

On the other hand, consider the case $w_j^*\not=0$. Without loss of generality, we can assume $w_j^*>0$. Note that whenever $y_j^t\leq0$, we also have $w_j^t\leq0$ so that
$$y_j^{t+1}=y_j^t+\eta_t\constv\left(w_j^*-w_j^t\right)\geq y_j^t+\eta_t\constv w_j^*.$$
From the above inequality, we see that $y_j^{t}$ is increasing where the increment is bounded from below by $\eta_t\constv w_j^*>0$ where $\sum\eta_t=\infty$, so that there must exist some $T_j>0$ such that $y_j^{T_j}>0$. With Lemma \ref{inf_norm}, we can without loss of generality assume that $\norm{\bm y^t}_1\geq5n\eta\constv$ for all $t\geq T_j$. For ease of notation, we let $\delta=\eta\constv$ so that $\norm{\bm y^t}_1\geq5n\delta$ for all $t\geq T_j$. We shall next prove that $y_j^t\geq0$ for all $t\geq T_j$. We prove by induction, assume $y_j^t>0$ for some $t>T_j$ and show $y_j^{t+1}>0$. 
\begin{enumerate}
    \item If $y_j^t>\delta$,
    $$y_j^{t+1}=y_j^t+\eta_t\constv\left(w_j^*-w_j^t\right)\geq y_j^t-\delta>0.$$
    \item If $0<y_j^t\leq\delta$, since $\norm{\bm y^t}_1\geq5n\delta$, Lemma \ref{zero} shows $w_j^t=0$ so that
    $$y_j^{t+1}=y_j^t+\eta_t\constv\left(w_j^*-w_j^t\right)=y_j^t+\frac{\eta_t\delta}{\eta} w_j^*>y_j^t>0.$$
\end{enumerate}
Combining the above two cases, we get our desired result.
\end{proof}

\begin{replemma}{firstcut}
Let $\set{\bm y^t}$ be any auxiliary sequence generated by Algorithm \ref{alg:bc}.
If $\bm w^*\not\in\cQ$, then any sub-sequential limit of $\tilde{\bm y}^t:=\normv{\bm y^t}$ belongs to the closure of $\bm O(\bm w^*)$. Furthermore, if $\bm O(\bm w^*)$ is regular, then $\bm y^t$ lies in $\bm O(\bm w^*)$ for all but finitely many $t$ values. 
\end{replemma}

\begin{proof}[Proof of Lemma \ref{firstcut}]
By Lemma \ref{orthant}, we see that $\sign{y_j^t}=\sign{w_j^*}$ for all $\bm w_j^*\not=0$. We only need to prove $w_j^*=0$ implies $\lim_{t\rightarrow\infty}\tilde{y}_j^t=0$. Indeed, by Lemma \ref{orthant}, we know that $y_j^t$ is bounded by $C_j$ while Lemma \ref{inf_norm} tells us $\norm{\bm y^t}$ goes to infinity. Thus, $\lim_{t\rightarrow\infty}\tilde{y}_j^t=\frac{y_j^t}{\norm{\bm y}}=0$.
\end{proof}

\begin{lemma}\label{order}
Let $\set{\bm w^*}$ and $\set{\bm y^t}$ be any sequence and auxiliary sequence generated by Algorithm \ref{alg:bc}.
Assuming that $\bm w^*\not\in\cQ_2^n$, we have the following fact.
\begin{enumerate}
    \item If $|w_j^*|>|w_i^*|$, then $|y_j^t|>|y_i^t|$ for all but finitely many $t$ values.
    \item If $|w_j^*|=|w_i^*|$, then $\left||y_j^t|-|y_i^t|\right|$ is bounded and $|w_j^t|=|w_i^t|$ for all but finitely many $t$ values.
\end{enumerate}
\end{lemma}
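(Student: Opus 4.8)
The plan is to convert the vector iteration of Algorithm~\ref{alg:bc} into a scalar recursion for the coordinate magnitudes $p_k^t:=|y_k^t|$ and then exploit that the ternary projection $\nproj{}{}$ preserves the ordering of coordinates by magnitude. First I would invoke Lemma~\ref{orthant} to discard finitely many iterates, so that every $k$ with $w_k^*\ne0$ has the sign $\sign{y_k^t}=\sign{w_k^*}$ frozen, and every $k$ with $w_k^*=0$ has $y_k^t$ bounded with $w_k^t=0$. Substituting this sign information into the update gives, for every $k$ with $w_k^*\ne0$ and all large $t$, the scalar recursion $p_k^{t+1}=p_k^t+\eta_t\constv(|w_k^*|-|w_k^t|)$, where by Lemma~\ref{proj3} $|w_k^t|=1/\sqrt{j^*(t)}$ if $k$ is among the $j^*(t)$ largest-magnitude coordinates of $\bm y^t$ (\emph{active}) and $|w_k^t|=0$ otherwise, with $0\le|w_k^t|\le1$. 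The structural fact I will use repeatedly, also from Lemma~\ref{proj3}, is that $\delta^t:=|w_j^t|-|w_i^t|$ always has the same sign as $e^t:=p_j^t-p_i^t$ and vanishes with it, and $|\delta^t|\le1$. I would also record a bootstrap claim: for every $k$ with $w_k^*\ne0$, $p_k^t\to\infty$; indeed $p_k^t$ can decrease (by at most $\eta\constv$) only on steps where $k$ is active, which by Lemma~\ref{zero} force $p_k^t\ge\tfrac1{5n}\norm{\bm y^t}_1$, whereas $\norm{\bm y^t}_1\to\infty$ by Lemma~\ref{inf_norm}, so $p_k^t$ cannot remain far below this diverging quantity.

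For part~2, with $|w_j^*|=|w_i^*|$ (the case $w_j^*=w_i^*=0$ being immediate from Lemma~\ref{orthant}, so take both nonzero): subtracting the two recursions gives $e^{t+1}=e^t-\eta_t\constv\delta^t$, and since $\delta^t$ has the sign of $e^t$ with $|\delta^t|\le1$ and $\eta_t\le\eta$, a one-line estimate yields $|e^{t+1}|\le\max\{|e^t|,\eta\constv\}$, so $\left||y_j^t|-|y_i^t|\right|$ is bounded. To get $|w_j^t|=|w_i^t|$ for large $t$ I would rule out that exactly one of $i,j$ is active: if, say, $j$ is active and $i$ is not, the maximizing property defining $j^*(t)$ in Lemma~\ref{proj3} forces a multiplicative gap $p^t_{(j^*(t))}\ge\rho\,p^t_{(j^*(t)+1)}$ with $\rho>1$ depending only on $n$ (here $p^t_{(\ell)}$ denotes the $\ell$-th largest magnitude), so $(\rho-1)p_i^t\le p_j^t-p_i^t=|e^t|$ would force $p_i^t$ bounded, contradicting the bootstrap claim; the swapped case is symmetric. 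Hence for large $t$ both $i,j$ are active or both inactive, and in either case $|w_j^t|=|w_i^t|$.

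For part~1, with $\gamma:=|w_j^*|-|w_i^*|>0$ (so $w_j^*\ne0$): if $w_i^*=0$ then $p_i^t$ is bounded while $p_j^t\to\infty$, so $e^t\to\infty$ and we are done; otherwise assume $w_i^*\ne0$. Subtracting recursions, $e^{t+1}=e^t+\eta_t\constv(\gamma-\delta^t)$. Whenever $e^t\le0$ we have $\delta^t\le0$, hence $e^{t+1}\ge e^t+\eta_t\constv\gamma$, so by $\sum_t\eta_t=\infty$ the iterate cannot stay $\le0$ forever; it then remains to show that once $e^t>0$ it cannot fall back to $\le0$ at large times. The only steps on which $e^t$ decreases have $e^t>0$, $j$ active, $i$ inactive, and $\delta^t=1/\sqrt{j^*(t)}>\gamma$, which forces $j^*(t)<1/\gamma^2$ to be bounded. \textbf{This is the main obstacle.} The resolution I have in mind: for such bounded $j^*(t)$, the cutoff gap above is bounded below by a constant $\rho_\gamma>1$, so $p_j^t\ge\rho_\gamma p_i^t$; combining this with $p_j^t\ge\tfrac1{5n}\norm{\bm y^t}_1$ (Lemma~\ref{zero}, $j$ active) gives $e^t=p_j^t-p_i^t\ge\tfrac{\rho_\gamma-1}{5n\rho_\gamma}\norm{\bm y^t}_1$, which by Lemma~\ref{inf_norm} eventually exceeds $\eta\constv$; since a decreasing step moves $e^t$ by at most $\eta\constv$, we get $e^{t+1}>0$. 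Non-decreasing steps obviously keep $e^t$ positive, so $e^t>0$ for all but finitely many $t$, i.e.\ $|y_j^t|>|y_i^t|$ eventually. The only genuinely computational part is extracting, from the two inequalities $\tfrac{\norm{\bm y^t_{[j^*]}}_1^2}{j^*}\ge\tfrac{\norm{\bm y^t_{[j^*\pm1]}}_1^2}{j^*\pm1}$ characterizing $j^*(t)$, the explicit gap $\rho=\rho_{j^*(t)}$ with $\rho_\ell=(1-\sqrt{1-1/\ell})/(\sqrt{1+1/\ell}-1)>1$, together with the quantitative estimate $\rho_\ell-1\gtrsim1/\ell$ needed in part~2.
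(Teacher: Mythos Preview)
Your proposal is correct and follows essentially the same route as the paper: both rely on the sign-freezing from Lemma~\ref{orthant} to pass to magnitudes, the multiplicative cutoff gap $p^t_{(j^*)}\ge\rho\,p^t_{(j^*+1)}$ with $\rho>1$ extracted from the optimality of $j^*$ in Lemma~\ref{proj3} (your clean formula for $\rho_\ell$ is precisely what the paper's quadratic manipulation \eqref{cond1}--\eqref{cond4} is computing), Lemmas~\ref{inf_norm} and~\ref{zero} to make that gap absolutely large, and the same ``once positive, stays positive'' induction for part~1. The only cosmetic differences are that the paper first reduces to \emph{consecutive} indices in the sorted order (so the active/inactive boundary is exactly at $j^*$) and then recycles inequality \eqref{cond4} for part~2, whereas you keep $i,j$ arbitrary throughout and reach the part-2 contradiction via a separate bootstrap $p_k^t\to\infty$; both organizations are fine.
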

\begin{proof}[Proof of Lemma \ref{order}]
Without loss of generality, we can assume $w_1^*\geq w_2^*\geq\cdots\geq w_n^*\geq0$. 

For the first statement, we only need to show that $w_j^*>w_{j+1}^*$ implies $y_j^t>y_{j+1}^t$ for all but finitely many $t$ values. 
Note that whenever $y_j^t<y_{j+1}^t$, then Lemma \ref{proj3} implies $w_j^t\leq w_{j+1}^t$, hence
$$
\begin{aligned}
&y_j^{t+1}-y_{j+1}^{t+1}\\
=&\left(y_j^t+\eta_t\constv\left(w_j^*-w_j^t\right)\right)-\left(y_{j+1}^t+\eta_t\constv\left(w_{j+1}^*-w_{j+1}^t\right)\right)\\
=&\left(y_j^{t}-y_{j+1}^t\right)+\eta_t\constv\left[\left(w_j^*-w_{j+1}^*\right)+\left(w_{j+1}^t-w_j^t\right)\right]\\
\geq&\left(y_j^{t}-y_{j+1}^t\right)+\eta_t\constv\left(w_j^*-w_{j+1}^*\right).
\end{aligned}
$$
Now that we know $y_j^{t}-y_{j+1}^t$ is increasing as long as it is negative and $\sum\eta_t=\infty$. Therefore, we conclude that there exist infinitely many $t$ values such that $y_j^{t}-y_{j+1}^t>0$. We can therefore assume $y_j^{T}-y_{j+1}^T>0$, where $T$ is the constant in Lemma \ref{inf_norm} such that $\norm{\bm y^t}_1\geq5n\sqrt{2\epsilon}$ for all $t\geq T$ where we set $\epsilon=\frac{\eta\norm{\bm v^*}^2}{\sqrt{2\pi n}}$. Next, we would like to show $y_j^t-y_{j+1}^t>0$  for all $t\geq T$ by induction. 

Next, assuming $y_j^{t}-y_{j+1}^t>0$, we want to show $y_j^{t+1}-y_{j+1}^{t+1}>0$. 

On the one hand, if $y_j^{t}-y_{j+1}^t\geq\epsilon$, we have
$$
\begin{aligned}
&y_j^{t+1}-y_{j+1}^{t+1}\\
=&\left(y_j^{t}-y_{j+1}^t\right)+\eta_t\constv\left[\left(w_j^*-w_{j+1}^*\right)+\left(w_{j+1}^t-w_j^t\right)\right]\\
>&\left(y_j^{t}-y_{j+1}^t\right)+\eta_t\constv\left(w_{j+1}^t-w_j^t\right)\\
\geq&\left(y_j^{t}-y_{j+1}^t\right)-\eta_t\constv\frac{1}{\sqrt{n}}\geq\left(y_j^{t}-y_{j+1}^t\right)-\epsilon\geq0.
\end{aligned}
$$

On the other hand, if $y_j^t-y_{j+1}^t<\epsilon$, we still have
$$y_j^{t+1}-y_{j+1}^{t+1}>\left(y_j^{t}-y_{j+1}^t\right)-\eta_t\constv\left(w_j^t-w_{j+1}^t\right),$$
so that it suffices to show $w_j^t=w_{j+1}^t$. From Lemma \ref{proj3}, we see that with
\begin{equation}\label{cond0}
j^*=\argmax_{j\in[n]}\frac{\norm{\bm y_{[j]}^t}_1^2}{j},
\end{equation}
we only need to show $j\not=j^*$. We prove by contradiction, assuming $j=j^*$ so that $w_j^t>0$ and $w_{j+1}^t=0$. Lemma \ref{zero} shows $y_j^t\geq\frac{1}{5n}\norm{\bm y^t}_1$.
Also, (\ref{cond0}) gives
\begin{equation}\label{cond1}
\frac{\norm{\bm y^t_{[j-1]}}_1^2}{j-1}\leq\frac{\norm{\bm y^t_{[j]}}_1^2}{j}=\frac{\left(\norm{\bm y^t_{[j-1]}}_1+y_j^t\right)^2}{j}.
\end{equation}
Simplifying the above inequality, we get
$$\left(\frac{\norm{\bm y^t_{[j-1]}}_1}{y_j^t}\right)^2-2(j-1)\left(\frac{\norm{\bm y^t_{[j-1]}}_1}{y_j^t}\right)-(j-1)\leq0.$$
Left hand side is a quadratic function of $\left(\frac{\norm{\bm y^t_{[j-1]}}_1}{y_j^t}\right)$, we know
\begin{equation}\label{eq1}
\frac{\norm{\bm y^t_{[j-1]}}_1}{y_j^t}\leq j-1+\sqrt{j(j-1)}\leq n-1+\sqrt{n(n-1)}<2n.
\end{equation}
We write equation (\ref{cond1}) in a different way and get
\begin{equation}\label{cond1_}
j\geq\frac{\left(\norm{\bm y^t_{[j-1]}}_1+y_j^t\right)^2}{y_j^t\left(2\norm{\bm y^t_{[j-1]}}_1+y_j^t\right)}.
\end{equation}
Now, we use $j=j^*$ again, to get
\begin{equation}\label{cond2}
\frac{\norm{\bm y^t_{[j]}}_1^2}{j}\leq\frac{\norm{\bm y^t_{[j+1]}}_1^2}{j+1}.
\end{equation}
Rewriting the above inequality, we get
\begin{equation}\label{cond2_}
j\leq\frac{\left(\norm{\bm y^t_{[j-1]}}_1+y_j^t\right)^2}{y_{j+1}^t\left(2\norm{\bm y^t_{[j-1]}}_1+2y_j^t+y_{j+1}^t\right)}.
\end{equation}
Combining (\ref{cond1_}) and (\ref{cond2_}), we get
$$
\left(y_j^t-y_{j+1}^t\right)^2-\left(2\norm{\bm y^t_{[j-1]}}_1+4y_j^t\right)\left(y_j^t-y_{j+1}^t\right)+2\left(y_j^t\right)^2\leq0.
$$
Solving the above inequality, we get
\begin{equation}\label{cond3}
\begin{aligned}
y_j^t-y_{j+1}^t\geq&\norm{\bm y^t_{[j-1]}}_1 + 2 y_j^t \\
&- \sqrt{\norm{\bm y^t_{[j-1]}}_1^2 + 4 \norm{\bm y^t_{[j-1]}}_1 y_j^t + 2 (y_j^t)^2}.
\end{aligned}
\end{equation}
Combining (\ref{eq1}) and (\ref{cond3}), we get
\begin{equation}\label{cond4}
y_j^t-y_{j+1}^t\geq (y_j^t)^2 \left(2n+2-\sqrt{4n^2+4n+2}\right)>\frac{(y_j^t)^2}{2}.
\end{equation}
Recalling that $y_j^t\geq\frac{1}{5n}\norm{\bm y^t}_1\geq\sqrt{2\epsilon}$, we have
$$y_j^t-y_{j+1}^t>\frac{1}{2}\left(\frac{\norm{\bm y^t}_1}{5n}\right)^2>\epsilon.$$
This contradiction shows $j\not=j^*$, and hence $w_j^t=w_{j+1}^t$ and it follows that $y_j^{t+1}>y_{j+1}^{t+1}$. Now, we have proved our first statement. 

For the second statement, since $w_j^*=w_i^*$, we have
$$
\begin{aligned}
&y_j^{t+1}-y_{i}^{t+1}\\
=&\left(y_j^t+\eta_t\constv\left(w_j^*-w_j^t\right)\right)-\left(y_{i}^t+\eta_t\constv\left(w_{i}^*-w_{i}^t\right)\right)\\
=&y_j^{t}-y_{i}^t-\eta_t\constv\left(w_{j}^t-w_i^t\right)=y_j^{t}-y_{i}^t-2\frac{\eta_t\epsilon}{\eta}\left(w_{j}^t-w_i^t\right).\\
\end{aligned}
$$
Hence, we know that $|y_j^t-y_i^t|$ is bounded by
$$C_{i,j}:=\max\set{|y_j^0-y_i^0|,\frac{\eta\norm{\bm v^*}^2}{\sqrt{2\pi}}}.$$

Without loss of generality, we can assume $j<i$ and $\min\set{y_j^t, y_i^t}\geq0$ by Lemma \ref{orthant}. Recalling  (\ref{cond4}), we have $w_j^t\not=w_i^t$ implying that 
$$|y_j^t-y_i^t|>\frac{\max\set{y_j^t,y_i^t}^2}{2}\geq\frac{1}{2}\left(\frac{\norm{\bm y^t}}{5n}\right)^2$$
where the right hand side goes to infinity. This contradicts the boundedness of $|y_j^t-y_i^t|$ if there are infinitely many $t$ values such that $w_j^t\not=w_i^t$.
\end{proof}

\begin{replemma}{secondcut}
Let $\set{\bm y^t}$ be any auxiliary sequence generated by Algorithm \ref{alg:bc}.
If $\bm w^*\not\in\cQ$, then any sub-sequential limit of $\tilde{\bm y}^t:=\normv{\bm y^t}$ belongs to the closure of $Cone(\bm w^*)$. Moreover, if $Cone(\bm w^*)$ is regular, then $\bm y^t\in Cone(\bm w^*)$ for all but finitely many $t$ values.
\end{replemma}

\begin{proof}[Proof of Lemma \ref{secondcut}]
Note that we already have Lemma \ref{firstcut}, we only need to show for any sub-sequential limit $\bm y$ of $\tilde{\bm y}^t$, we have $\sign{|y_j|-|y_i|}=\sign{|w_j^*|-|w_i^*|}$. The first statement of Lemma \ref{order} tells us that it is true for all $\sign{|w_j^*|-|w_i^*|}\not=0$. Thus, it suffices to show that $|w_j^*|=|w_i^*|$ implies $|y_j|=|y_i|$.

Note that the second statement of Lemma \ref{order} says that $\left||y_j|-|y_i|\right|$ is bounded by $C_{i,j}$, while Lemma \ref{inf_norm} gives $\lim_{t\rightarrow\infty}\norm{\bm y^t}=\infty$, we see that
$$|\tilde{y}_j|=\lim_{k\rightarrow\infty}\frac{|y_j^{t_k}|}{\norm{\bm y^{t_k}}}=\lim_{k\rightarrow\infty}\frac{|y_i^{t_k}|}{\norm{\bm y^{t_k}}}=|\tilde{y}_i|.$$
\end{proof}

\begin{replemma}{choices}
Let $\set{\bm w^t}$ be the sequence generated by Algorithm \ref{alg:bc}.
If $\bm w^*\not\in\cQ$, then $\bm w^t\in\Lambda(\bm w^*)$ for all but finitely many $t$ values. 
\end{replemma}
\begin{proof}[Proof of Lemma \ref{choices}]
First, by Proposition \ref{prop:cone}, $\bm y^t\in Cone(\bm w^*)$ implies $\nproj{}{} (\y^t)\in\Lambda(\bm w^*)$. 

Second, let $\tilde{\partial}Cone(\bm w^*)=\overline{Cone(\bm w^*)}-Cone(\bm w^*)$.
Now, a non-zero $\bm y^t\in\tilde{\partial}Cone(\bm w^*)$ implies $Cone(\bm y^t)\subset\tilde{\partial}Cone(\bm w^*)$ so that we also have $\nproj{}{}(\bm y^t)\in\Lambda(\bm y^t)\subset\Lambda(\bm w^*)$.

Third, by compactness of $\overline{Cone(\bm w^*)}\cap\mathcal{S}^{n-1}$, we know there exists some $\epsilon>0$ such that $\tilde{\bm y}^t:=\normv{\bm y^t}$ lies in $\epsilon$-neighborhood of $Cone(\bm w^*)\cap\mathcal{S}^{n-1}$ implying  $\nproj{}{} (\bm y^t)\in\Lambda(\bm w^*)$.

Finally, Lemma \ref{secondcut} suggests $\tilde{\bm y}^t$ lies in $\epsilon$-neighborhood of $Cone(\bm w^*)$ for all but finitely many $t$ values. We get our desired result. 
\end{proof}

\begin{repthm}{recurrent}[Ternary Case]
Let $\set{\bm z_j}_{j=1}^k=\Lambda(\bm w^*)$ where ${\bm z}_1=\nproj{}{}\bm w^*$ is the optimum and $\bm w^*=\sum_{j=1}^k\lambda_j\bm z_j$. 
If $0<\sum_{j=2}^k\lambda_j<1$, we have $\bm w^t=\nproj{}{}\bm w^*$ for infinite many $t$ values, where $\bm w^t$ is any infinite sequence generated by Algorithm \ref{alg:bc} with any initialization. 
\end{repthm}
\begin{proof}[Proof of Theorem \ref{recurrent} (Ternary Case)]
Note that Lemma \ref{secondcut} suggests $\tilde{\bm y}^t=\normv{\bm y^t}$ lies in $\epsilon$-neighborhood of $Cone(\bm w^*)$ for all but finitely many $t$ values. 
Let $\Lambda(\bm w^*)=\set{\bm z_1,\cdots,\bm z_k}$ and define $\mu_j^t$ be the constants such that
$$\bm y^t=\sum_{j=1}^k\mu_j^t\bm z_j$$
which is determined uniquely by $\bm y^t$. 

Let $\bm w^t=\bm z_{j_t}$, we know from Algorithm \ref{alg:bc} that
$$\bm y^{t+1}-\bm y^t=\eta_t\constv(\bm w^*-\bm z_{j_t}).$$
Thus
$$\sum_{j=2}^k\mu_j^{t+1}=\sum_{j=2}^k\mu_j^{t}+\eta_t\constv\left[\left(\sum_{j=2}^k\lambda_j\right)-1\right].$$
It follows that
$$\sum_{j=2}^k\mu_j^{t}=\text{Constant}+\left(\sum_{s=0}^{t-1}\eta_s\right)\constv\left[\left(\sum_{j=2}^k\lambda_j\right)-1\right]<0,$$
for large $t$'s. 
Now we see that when $t$ is large enough, $\tilde{\bm y}^t$ is bounded away from $Cone(\bm w^*)$ which contradicts Lemma \ref{secondcut} and our desired result follows.
\end{proof}

%\clearpage
%\section*{CIFAR-10 Experiment Figures}
\begin{figure*}[ht]
    \centering
    \begin{tabular}{cc}
    \includegraphics[width=0.48\linewidth]{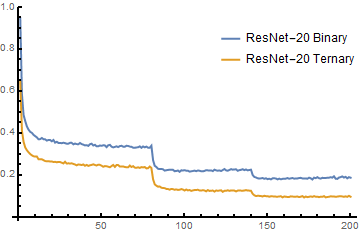} &
    \includegraphics[width=0.48\linewidth]{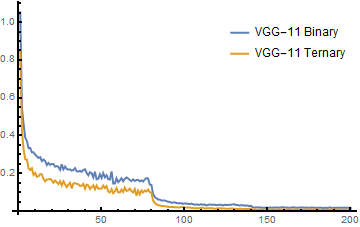} 
    \end{tabular}
    \caption{Training Loss of CIFAR-10. {\bf Left:} Binary/Ternary weight ResNet-20. {\bf Right:} Binary/Ternary weight VGG-11.}
    \label{fig:cifar_loss}
\end{figure*}

\begin{figure*}[t]
    \centering
    \includegraphics[width=0.9\linewidth]{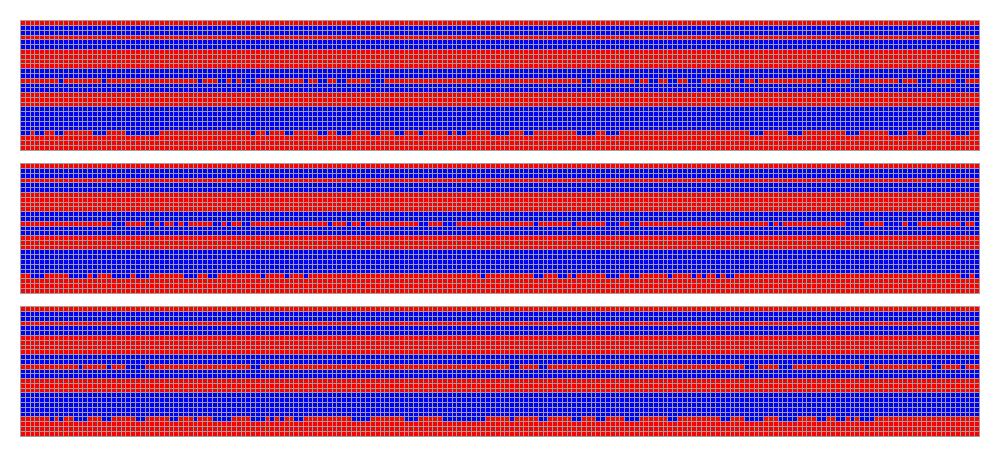}
    \caption{{\bf  Evolution of signs of weight filters in the last training epoch (or 600 iterations) of ResNet-20.} 
    Each of the three $27\times200$ blocks corresponds to evolution of the $3\times3\times3$ convolutional filter over $200$ iterations. Binary weights over the last 600 iterations of training, red/blue for sign values $1$/$-1$.}
\end{figure*}

\begin{figure*}[t]
    \centering
    \includegraphics[width=0.9\linewidth]{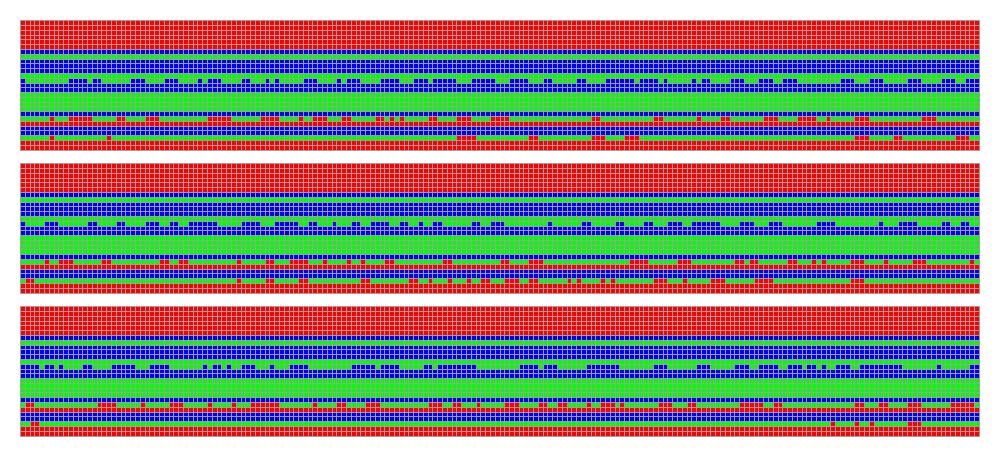}
    \caption{{\bf  Evolution of signs of weight filters in the last training epoch (or 600 iterations) of ResNet-20.} 
    Each of the three $27\times200$ blocks corresponds to evolution of the $3\times3\times3$ convolutional filter over $200$ iterations. Ternary weights over the last 600 iterations of training, red/green/blue for sign values $1$/$0$/$-1$.}
\end{figure*}

\begin{figure*}[t]
    \centering
    \includegraphics[width=0.9\linewidth]{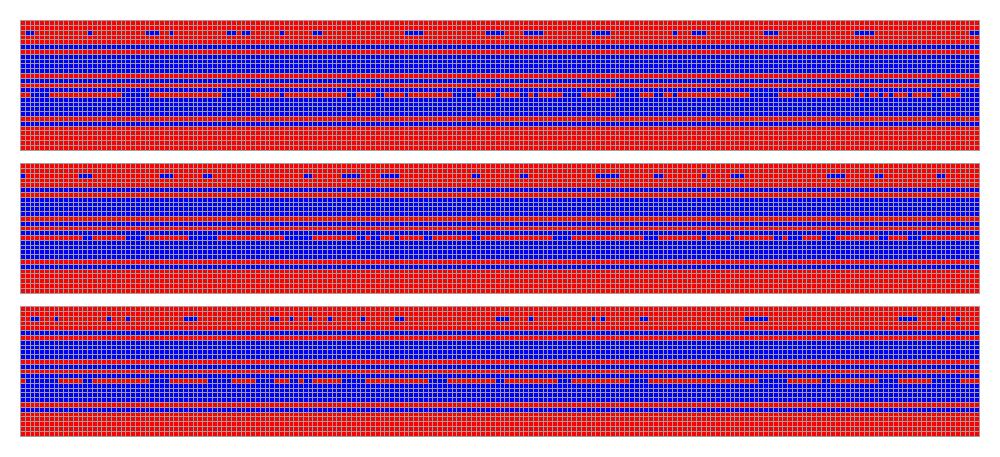}
    \caption{{\bf  Evolution of signs of weight filters in the last training epoch (or 600 iterations) of VGG-11.} 
    Each of the three $27\times200$ blocks corresponds to evolution of the $3\times3\times3$ convolutional filter over $200$ iterations. Binary weights over the last 600 iterations of training, red/blue for sign values $1$/$-1$.}
\end{figure*}

\begin{figure*}[t]
    \centering
    \includegraphics[width=0.9\linewidth]{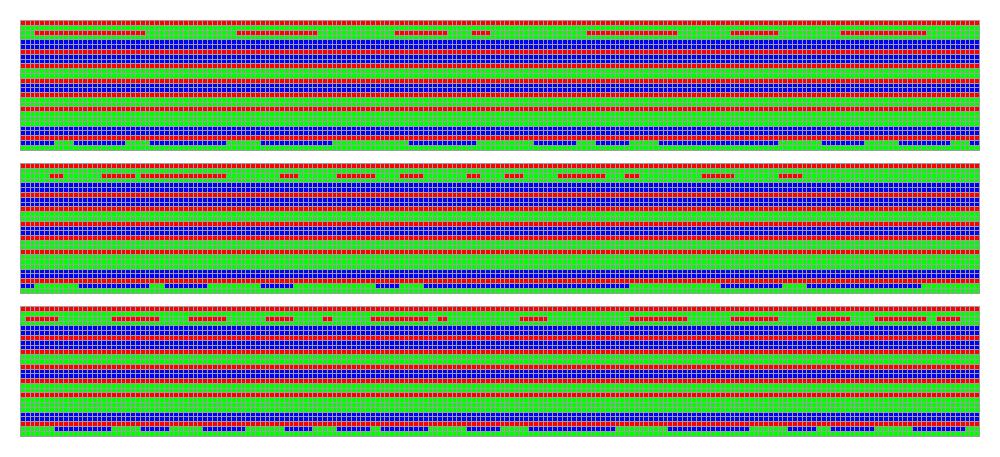}
    \caption{{\bf  Evolution of signs of weight filters in the last training epoch (or 600 iterations) of VGG-11.} 
    Each of the three $27\times200$ blocks corresponds to evolution of the $3\times3\times3$ convolutional filter over $200$ iterations. Ternary weights over the last 600 iterations of training, red/green/blue for sign values $1$/$0$/$-1$.}
\end{figure*}

\end{document}